\newcommand{\bmmc}[1]{\bm{\mathcal{#1}}}
\newtheorem{proposition}{Proposition}
\newtheorem{definition}{Definition}
\newtheorem{remark}{Remark}
\newcommand{\normF}[1]{\|#1\|_F}      % Frobenius norm
\newcommand{\normtwo}[1]{\|#1\|_2}    % Vector 2-norm
\newcommand{\R}{\mathbb{R}}           % Real numbers
\newcommand{\argmin}{\operatornamewithlimits{argmin}} % Argmin
\title{EcoSpa: Efficient Transformer Training with Coupled Sparsity}
\author{%
  \textbf{Jinqi Xiao}$^{1}$,\quad
  \textbf{Cheng Luo}$^{2}$,\quad
  \textbf{Lingyi Huang}$^{1}$,\quad
  \textbf{Cheng Yang}$^{1}$,\quad
  \textbf{Yang Sui}$^{3}$,\quad
  \textbf{Huy Phan}$^{1}$,\\\\
  \textbf{Xiao Zang}$^{1}$,\quad
  \textbf{Yibiao Ying}$^{1}$,\quad
  \textbf{Zhexiang Tang}$^{1}$,\quad
  \textbf{Anima Anandkumar}$^{2}$,\quad
  \textbf{Bo Yuan}$^{1}$\\\\
  $^{1}$Rutgers University \quad
  $^{2}$California Institute of Technology \quad 
  $^{3}$Rice University
}
\begin{document}

\maketitle

\begin{abstract}
Transformers have become the backbone of modern AI, yet their high computational demands pose critical system challenges. While sparse training offers efficiency gains, existing methods fail to preserve critical structural relationships between weight matrices that interact multiplicatively in attention and feed-forward layers. This oversight leads to performance degradation at high sparsity levels. We introduce EcoSpa, an efficient structured sparse training method that jointly evaluates and sparsifies coupled weight matrix pairs, preserving their interaction patterns through aligned row/column removal. EcoSpa introduces a new granularity for calibrating structural component importance and performs coupled estimation and sparsification across both pre-training and fine-tuning scenarios. Evaluations demonstrate substantial improvements: EcoSpa enables efficient training of LLaMA-1B with 50\% memory reduction and 21\% faster training, achieves $2.2\times$ model compression on GPT-2-Medium with $2.4$ lower perplexity, and delivers $1.6\times$ inference speedup. The approach uses standard PyTorch operations, requiring no custom hardware or kernels, making efficient transformer training accessible on commodity hardware.

% For training sparse DeiT from scratch, our approach brings $0.8\%$ accuracy increase over the state-of-the-art solutions. Experiments of applying EcoSpa for pruning language transformer models also show significant performance improvement with respect to model accuracy and pruning cost reduction.
\end{abstract}

\section{Introduction}
Transformers~\cite{vaswani2017attention} have become the dominant architecture for modern machine learning, powering breakthroughs in natural language processing~\cite{devlin-etal-2019-bert,openai2023gpt4,touvron2023llama}, computer vision~\cite{dosovitskiy2020image,liu2021swin}, and beyond. However, their computational demands pose critical system challenges. Training large models like LLaMA-7B requires extensive GPU resources~\cite{touvron2023llama}, while deployment demands high-end accelerators, limiting accessibility to well-resourced institutions and increasing operational costs for practical applications.

\textbf{Sparse training} has emerged as a promising solution to these challenges. By dynamically identifying and removing unimportant parameters during training, sparse methods can reduce computational and memory costs while maintaining model quality. As illustrated in Fig.~\ref{fig:ecospa_overview}, sparse training applies to both \textit{pre-training} scenarios (training from random initialization) and \textit{fine-tuning} scenarios (adapting pre-trained models), offering efficiency improvements across the model lifecycle. Recent works have explored various structured sparsity approaches for transformers: head-level sparsity~\cite{chen2021chasing,chen2021earlybert,ma2023llm}, layer-wise removal~\cite{chen2024compressing,men2024shortgpt}, and fine-grained row/column pruning~\cite{ashkboos2024slicegpt,van2023llm}.

\begin{figure*}
    \centering
    \includegraphics[width=\linewidth]{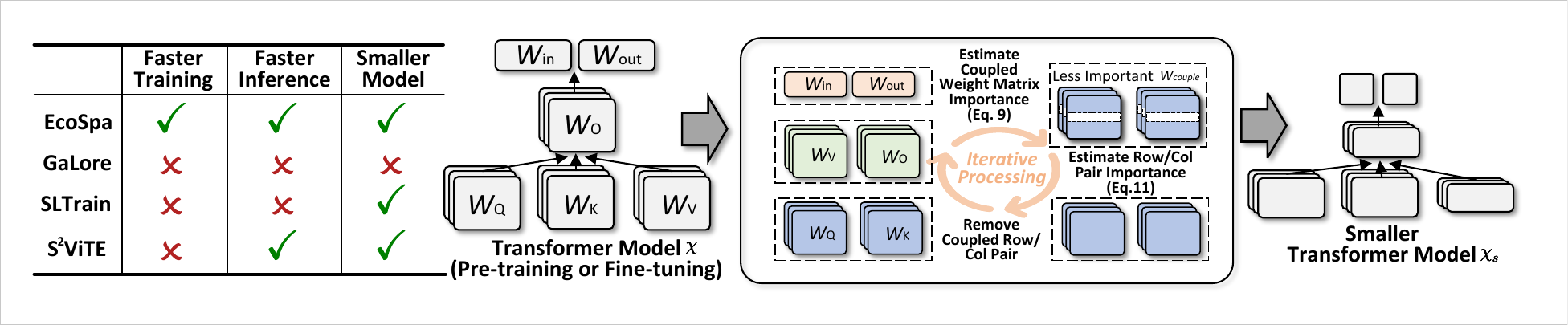}
    \caption{Overview of EcoSpa. Unlike existing methods that independently sparsify weight matrices, EcoSpa jointly estimates the importance of coupled weight matrices and removes aligned row/column pairs, preserving coupling relationships. The approach is applicable to both pre-training and fine-tuning scenarios.}
    \label{fig:ecospa_overview}
\end{figure*}

\textbf{The coupling problem.} Despite progress in sparse training for transformers~\cite{chen2021chasing,chen2021earlybert,ashkboos2024slicegpt,van2023llm}, most existing methods do not explicitly account for a critical architectural property: weight matrices in transformers often operate in \textit{coupled pairs} whose outputs interact multiplicatively. For example, in multi-head attention, $\bm{W}_Q$ and $\bm{W}_K$ produce queries and keys whose interaction determines attention scores. Similarly, feed-forward networks use consecutive matrices $\bm{W}_1$ and $\bm{W}_2$ whose outputs must be dimensionally compatible. Current approaches~\cite{ashkboos2024slicegpt,van2023llm} typically sparsify these coupled matrices by treating each matrix as an isolated component, which can disrupt their interaction and lead to performance degradation, especially at high sparsity levels where the remaining parameters must collectively preserve the model's representational capacity. This makes the preservation of their structural relationships critical for system efficiency.

\textbf{This paper} introduces EcoSpa, a structured sparse training method that preserves coupling relationships during sparsification. Rather than independently removing rows/columns from individual matrices, EcoSpa jointly evaluates coupled matrix pairs and removes \textit{aligned} row/column pairs---ensuring that when a column is removed from $\bm{W}_Q$, the corresponding column is also removed from $\bm{W}_K$. This alignment preserves the dimensionality and interaction of their outputs while reducing memory footprint and computational cost. We adapt empirical Fisher information to the context of coupled matrix pairs, enabling efficient identification of which pairs contribute least to model output, and then apply coordinated removal within each pair. Theoretical analysis (Appendix~\ref{theo_sec}) shows this approach better preserves the approximation quality of matrix products compared to independent sparsification.

\textbf{Contributions and Results.} We introduce two key innovations: (\textit{i}) \textit{Coupled estimation} that assesses importance at the granularity of coupled weight matrix pairs rather than individual matrices, and (\textit{ii}) \textit{Coupled sparsification} that removes aligned row/column pairs to preserve coupling relationships and dimensional compatibility. Our comprehensive evaluations demonstrate substantial system improvements: EcoSpa reduces GPU memory by 50\% (3.88GB vs 7.80GB) and training time by 21\% for LLaMA-1B pre-training, enabling efficient deployment while achieving $1.6\times$ inference speedup. For GPT-2-Medium, it delivers $2.2\times$ compression with $2.4$ lower perplexity and 40\% faster training. Vision Transformers achieve higher accuracy than prior sparse training methods at comparable compression levels. Ablation studies validate that coupled sparsification substantially outperforms independent approaches, with performance gaps widening at high sparsity levels where efficiency matters most for practical deployment.

%\section{Related Works}
%\input{2_related works}
\section{Background}
\subsection{Preliminaries}
\textbf{Notation.} We represent tensors using boldface calligraphic script, denoted as $\bmmc{X}$. Matrices and vectors are indicated with boldface capital and lowercase letters, such as $\bm{X}$ and $\bm{x}$, respectively. Furthermore, non-boldface letters with indices, \emph{e.g.}, $\mathcal{X}(i_1,\cdots,i_d)$, $X(i,j)$, and $x(i)$, denote the entries for a $d$-dimensional tensor $\bm{\mathcal{X}}$, a matrix $\bm{X}$, and a vector $\bm{x}$, respectively.

\textbf{Transformer.} For Transformer-based models, the key components include the Multi-Head Attention (MHA) and Feed-Forward Network (FFN). More specifically, the MHA operation is defined as follows:
\begin{equation}
{\rm MHA}(\bm{X}_Q,\bm{X}_K,\bm{X}_V)={\rm Concat}(head_1, \dots, head_h)\bm{W}^O,
\label{eq:mha}
\end{equation}
where $\bm{X}_Q,\bm{X}_K,\bm{X}_V \in \mathbb{R}^{l \times d_m}$ are the length-$l$ input sequences, $d_m$ is the embedding dimension, and $h$ is the number of attention heads. Here each attention head $head_i$ operates as below:
\begin{equation}
\begin{aligned}
head_i={\rm Attention}( \bm{X}_Q\bm{W}_{i}^{Q},\ \bm{X}_K\bm{W}_{i}^{K},\ \bm{X}_V\bm{W}_{i}^{V}) =\phi\left(\frac{\bm{X}_Q\bm{W}_{i}^{Q}(\bm{X}_K\bm{W}_{i}^{K})^\top}{\sqrt{d_h}}\right)\bm{X}_V\bm{W}_{i}^{V},
\label{eq:self-attention}
\end{aligned}
\end{equation}
where $\phi(\cdot)$ represents the softmax function, $\bm{W}_{i}^{Q}$, $\bm{W}_{i}^{K}, \bm{W}_{i}^{V} \in \mathbb{R}^{d_m \times d_h}$, $\bm{W}^{O} \in \mathbb{R}^{d_m \times d_m}$, and $d_m=d_h\times h$. 
The FFN consists of two fully connected layers with a Gaussian Error Linear Unit (GELU) activation function applied in between.
Let $\bm{X}$ represent the input embeddings, and $\bm{W}_{in}\in \mathbb{R}^{d_m \times d}, \bm{W}_{out}\in \mathbb{R}^{d \times d_m}, \bm{b}_{in}, \bm{b}_{out}$ denote the weight matrices and bias vectors, respectively. The operation of FFN can be then defined as follows:
\begin{equation}
{\rm FFN}(\bm{X}) = {\rm GELU}(\bm{X}\bm{W}_{in}+\bm{b}_{in})\bm{W}_{out}+\bm{b}_{out}.
\label{eq:ffn}
\end{equation}

\subsection{Related Work}

\textbf{Sparse Training for Transformers.} Sparse training for transformers aims to reduce computational and memory costs by identifying and removing unimportant parameters during training. Existing methods can be categorized by their target training stage and sparsification granularity.

\textit{Pre-training from Scratch.} Training sparse transformers from random initialization remains under-explored compared to sparse CNN pre-training, where dynamic sparse training methods~\cite{mocanu2018scalable, mostafa2019parameter, evci2020rigging, yuan2021mest, chen2023otov2} and structured sparsification~\cite{he2019filter, wang2019cop, lin2020hrank, hou2022chex} have been extensively studied. For transformers, Chen et al.~\cite{chen2021chasing} dynamically extract and train sparse sub-networks for Vision Transformers through unstructured or structured sparsification, alleviating memory bottlenecks. Chen et al.~\cite{chen2021earlybert} identify structured winning lottery tickets early in BERT pre-training to improve efficiency. Dao et al.~\cite{dao2022monarch} propose Monarch matrices---a class of structured matrices that approximate dense weights for hardware-efficient pre-training. Han et al.~\cite{han2024sltrain} combine low-rank and unstructured sparse matrices for the pre-training model. Low-rank compression with hardware awareness~\cite{xiao2023haloc} and attention-centric customization frameworks~\cite{xiao2023comcat} further demonstrate the benefits of coordinating structural assumptions across coupled transformer blocks. These methods reduce training costs but typically operate at coarse granularities (entire heads or layers) or apply sparsification independently to each weight matrix, with limited consideration of interactions between coupled matrices.

\textit{Fine-tuning Pre-trained Models.} Sparsifying pre-trained models for efficient deployment has attracted significant attention. Methods fall into two categories: \textit{unstructured sparsification}~\cite{frantar2023sparsegpt, sun2023simple, zhang2023dynamic, xia2023sheared, malla2024copal} removes individual weights, achieving high compression ratios but limited speedup due to irregular patterns requiring specialized hardware; \textit{structured sparsification}~\cite{ma2023llm, xia2023sheared, ashkboos2024slicegpt, an2024fluctuation, chen2023otov, chen2023lorashear, zhao2024apt, yang2024laco} targets architectural components (heads, rows, columns, layers), enabling practical speedup on commodity hardware. Complementary work explores memory-efficient optimization via correlation-aware gradient projections~\cite{xiao2025coap}, compatible token pruning for multimodal transformers~\cite{yang2025topv}, and expert-wise compression strategies for mixture-of-experts architectures~\cite{yang2024moe}. For importance estimation, magnitude-based approaches~\cite{sun2023simple, an2024fluctuation} use absolute weight values, while loss-based approaches~\cite{ma2023llm, van2023llm} assess sparsification impact via gradient information from Taylor expansion. However, most of these methods evaluate each weight matrix independently, which may disrupt multiplicative interactions between coupled matrix pairs in attention and feed-forward layers.

\textbf{Importance Estimation for Sparsification.} Determining which parameters to remove is central to sparse training. Magnitude-based methods~\cite{han2015deep, sun2023simple} use weight magnitudes as importance proxies, offering computational efficiency but often failing to capture actual contributions to model outputs. Gradient and loss-based methods~\cite{ma2023llm, van2023llm} estimate importance by analyzing how parameter removal affects loss through Taylor expansion, providing more accurate assessment while still treating parameters independently. Fisher information-based methods offer a principled approach by quantifying output sensitivity to parameter perturbations~\cite{theis2018faster}. However, existing applications compute importance for individual parameters or matrices separately, without jointly evaluating coupled matrix pairs whose outputs interact multiplicatively---a gap that becomes critical at high sparsity levels where preserving structural relationships is essential.

\section{Method}
EcoSpa consists of two key steps illustrated in Fig.~\ref{fig:fine-grained}. \textbf{(1) Coupled Estimation} (Section~\ref{subsec:estimation}) identifies unimportant coupled weight matrices by jointly evaluating matrix pairs that interact multiplicatively in transformer computations. \textbf{(2) Coupled Sparsification} (Section~\ref{subsec:Sparsification}) removes aligned row/column pairs from these coupled matrices to preserve their structural relationships while achieving sparsification.

\begin{figure*}[t]
    \centering
    \includegraphics[width=\linewidth]{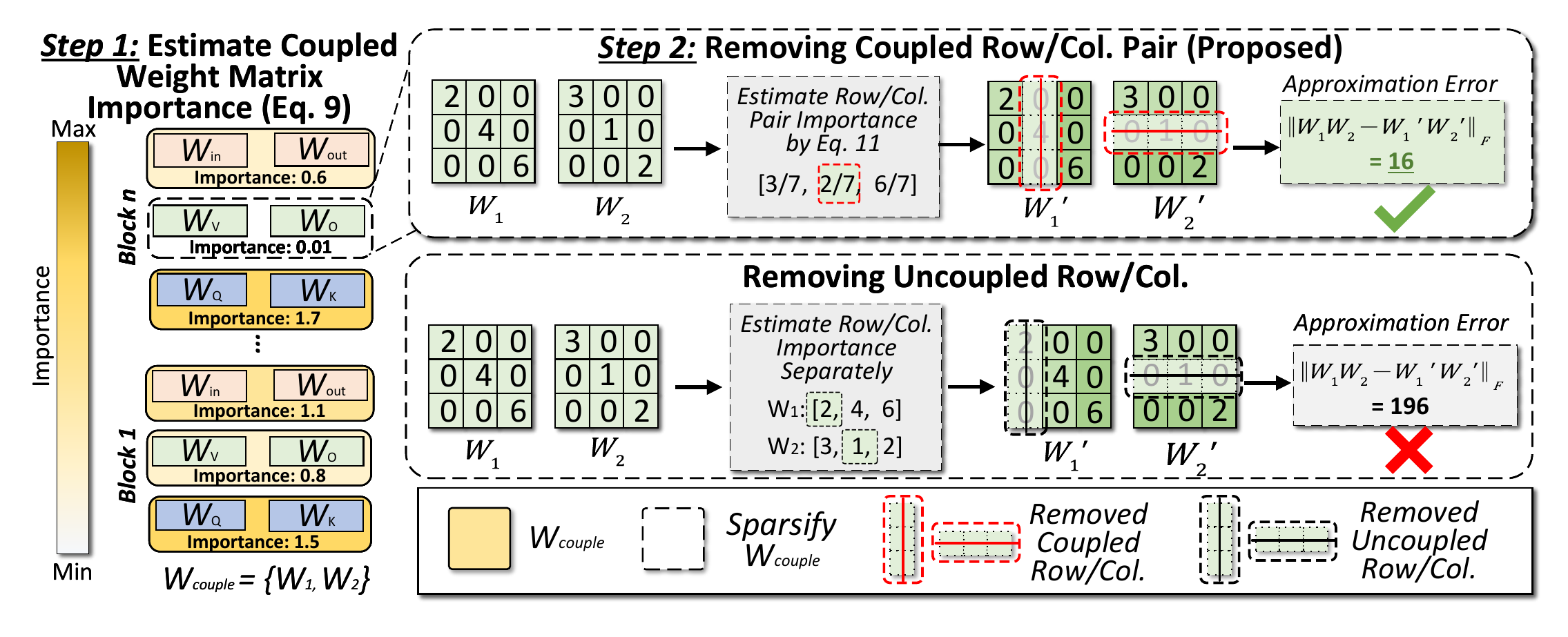}
    \caption{\textbf{EcoSpa Method Overview.} (Step 1) \textbf{Coupled Estimation}: Jointly evaluate importance of coupled weight matrix pairs using empirical Fisher information. (Step 2) \textbf{Coupled Sparsification}: Remove aligned row/column pairs from less important coupled matrices, preserving dimensional compatibility and interaction patterns. The method ensures that when a column is removed from $\bm{W}_1$, the corresponding row is also removed from $\bm{W}_2$.}
    \label{fig:fine-grained}
\end{figure*}

\subsection{Coupled Estimation: Coupled Weight Matrix-wise Importance Calibration}
\label{subsec:estimation}

As extensively studied in the literature~\cite{frantar2023sparsegpt, sun2023simple, van2023llm}, estimating the unimportant components of neural networks plays a crucial role for model sparsification. In general, importance estimation can be performed at different granularity levels, such as weight, neuron, layer, and head. More specifically, when aiming for obtaining the structured sparse transformers, identifying the insignificant heads and layers is the most common practice~\cite{chen2021chasing,chen2021earlybert,men2024shortgpt,ma2023llm}. 

%\xjq{While these approaches have shown effectiveness, they typically treat the weight matrices ($\bm{W}{i}^{Q}$, $\bm{W}{i}^{K}$, $\bm{W}{i}^{V}$, and $\bm{W}{i}^{O}$) in attention mechanisms as independent entities. However, these matrices exhibit intrinsic coupling, forming low-rank composite structures: $\bm{W}_i^Q{\bm{W}_i^K}^\top$ and $\bm{W}_i^V\bm{W}_i^O$. These coupled weight matrices reflect distinct functional roles within attention heads. Specifically, $\bm{W}_i^{QK} = \bm{W}_i^Q{\bm{W}_i^K}^\top$ governs token-to-token relationships by determining attention patterns, while $\bm{W}_i^{VO} = \bm{W}_i^V\bm{W}_i^O$ captures token influence on the output by modulating and integrating attended information~\cite{elhage2021mathematical}. This coupled structure encapsulates both the selection of relevant information and its transformation into meaningful outputs.}

% \xjq{Motivated by this observation, we propose a new granularity for assessing the structural importance in transformers: the \textit{coupled weight matrix}. By leveraging the unique computational patterns and topological properties of transformers, this approach enables finer-grained importance estimation while naturally accounting for inter-matrix correlations. 
% }

Unlike existing works that independently evaluate individual weight matrices, we observe that matrices in transformers often operate in \textit{coupled pairs} whose outputs interact multiplicatively. For example, in multi-head attention (Eq.~\ref{eq:self-attention}), $\bm{W}_i^Q$ and $\bm{W}_i^K$ produce queries and keys that compute attention scores via their product $(\bm{X}_Q\bm{W}_i^Q)(\bm{X}_K\bm{W}_i^K)^\top$. Independently removing dimensions from $\bm{W}_i^Q$ (e.g., removing the $j$-th column) without corresponding removal from $\bm{W}_i^K$ disrupts dimensional compatibility and degrades the quality of their interaction. This observation motivates us to propose a new granularity for importance estimation: the \textit{coupled weight matrix}.

\textbf{Coupled Weight Matrix in MHA.} Recall that Eq. \ref{eq:mha} and Eq. \ref{eq:self-attention} depict the computations of MHA, which consists of four types of weight matrices  $\bm{W}^Q$, $\bm{W}^K$, $\bm{W}^V$ and $\bm{W}^O$. We propose to estimate the structural importance of MHA by analyzing the combined effect of these matrices. More specifically, consider the following mathematical reformulation of Eq. \ref{eq:self-attention}:
\begin{equation}
\begin{aligned}
{\rm MHA}(\bm{X}_Q,\bm{X}_K,\bm{X}_V)=\sum_{i=1}^{h} head_i\bm{W}_{i}^{O}=\sum_{i=1}^{h}\phi(\frac{\bm{X}_Q\overbrace{{\bm{W}_{i}}^{Q}{\bm{W}_{i}^{K}}^\top}^{\bm{W}_{i}^{QK}}\bm{X}^\top_K}{\sqrt{d_h}})\bm{X}_V \overbrace{{\bm{W}_{i}}^V {\bm{W}_{i}}^O}^{\bm{W}_{i}^{VO}},
\label{eq:self-attention-new}
\end{aligned}
\end{equation}
where $\bm{W}_{i}^{O} \in \mathbb{R}^{d_h \times d_m}$, $\bm{W}^{O}={\rm Concat}(\bm{W}_{1}^{O},\dots,\bm{W}_{h}^{O})$. It is seen that $\bm{W}_{i}^{QK}=\bm{W}_{i}^{Q}{\bm{W}_{i}^{K}}^{\top}$ and $\bm{W}_{i}^{VO}=\bm{W}_{i}^{V}\bm{W}_{i}^{O}$, as the combination of two weight matrices, can serve as the structural components in MHA. Following this perspective, we propose to set the granularity of importance estimation at the level of those coupled weight matrices. We believe this strategy brings two benefits: \textit{i}) it provides more fine-grained measurement than the commonly adopted head-level calibration; and \textit{ii}) it meanwhile naturally explores the inter-matrix correlation within the attention heads, avoiding the limitations if only focusing on individual weight matrices.

\textbf{Connection to Transformer Circuits Framework.} These coupled weight matrices correspond to distinct functional roles in the transformer circuits framework~\cite{elhage2021mathematical}: $\bm{W}_i^{QK}$ determines attention patterns by computing token-to-token relationships, while $\bm{W}_i^{VO}$ modulates attended information for output generation. Preserving these structural relationships during sparsification maintains the computational circuits essential for transformer effectiveness\footnote{For example, induction heads that enable in-context learning rely on $\bm{W}_i^{QK}$ for pattern retrieval and $\bm{W}_i^{VO}$ for correct prediction~\cite{olsson2022context}. Given an input like ``A B ... A'', $\bm{W}_i^{QK}$ computes attention scores to retrieve previous patterns, while $\bm{W}_i^{VO}$ ensures correct prediction of ``B'' as the next token.}.

% Following this perspective, we can then set the granularity of importance estimation at the level of those coupled weight matrices. We believe this strategy brings two benefits: \textit{i}) it provides more fine-grained measurement than the commonly adopted head-level calibration; and \textit{ii}) it meanwhile naturally explores the inter-matrix correlation within the attention heads, avoiding the limitations if only focusing on individual weight matrices. 

% \by{Eq. 4 should have another line to explicit show coupled weight matrix within the equation, also applied for other equations.}

\textbf{Extension to GQA.} Next we show that the concept of coupled weight matrix can also be applied to Grouped Query Attention (GQA) \cite{ainslie2023gqa} -- the generalization of MHA that has been adopted in  state-of-the-art LLMs such as LLaMA~\cite{touvron2023llama}, Falcon~\cite{almazrouei2023falcon}, Mistral~\cite{jiang2023mistral}. Recall that GQA divides $h$ attention heads into multiple groups, so its attention mechanism can be reformulated as follows:
\begin{equation}
\begin{aligned}
{\rm GQA}(\bm{X}_Q, \bm{X}_K, \bm{X}_V)
=\sum_{i=1}^{h}\phi(\frac{\bm{X}_Q\overbrace{\bm{W}_{i }^Q\bm{W}_{g(i)}^{K^\top}}^{\bm{W}_{i,g(i)}^{QK}}\bm{X}_K^\top}{\sqrt{d_h}})\bm{X}_V\overbrace{\bm{W}_{g(i)}^V \bm{W}_{i}^{O}}^{\bm{W}_{g(i),i}^{VO}},
\end{aligned}
\end{equation}
where $g(i)$ maps the $i$-th head to its corresponding group. It is seen the structural components in the format of coupled weight matrices also exist for GQA, as  $\bm{W}_{i,g(i)}^{QK}=\bm{W}_{i}^{Q}{\bm{W}_{g(i)}^{K^{\top}}}$ and $\bm{W}_{g(i),i}^{VO}=\bm{W}_{g(i)}^{V}\bm{W}_{i}^{O}$.

\textbf{Compatibility with RoPE.} Furthermore, when rotary position embedding (RoPE)~\cite{su2024roformer} is used in the model architecture, the  coupled weight matrix can also be incorporated into this position-aware scenario, capturing both the structural coupling and positional encoding as follows:
\begin{equation}
\begin{aligned}
&{\rm MHA}(\bm{X}_Q,\bm{X}_K,\bm{X}_V)_{\rm{RoPE}}
=\sum_{i=1}^{h}\phi(\frac{{\rm RoPE}(\bm{X}_Q{{\bm{W}_{i}}^{Q}){{\rm RoPE}({\bm{X}_K}\bm{W}_{i}^{K}}})^\top}{\sqrt{d_h}})\bm{X}_V\bm{W}_{i}^{VO}\\
&=\sum_{i=1}^{h}\phi(\frac{\bm{X}_Q\bm{W}_i^Q\bm{P}_t\bm{P}_s^\top\bm{W}_i^{K^\top}\bm{X}_K^\top}{\sqrt{d_h}})\bm{X}_V\bm{W}_{i}^{VO}
=\sum_{i=1}^{h}\phi(\frac{\bm{X}_Q\overbrace{\bm{W}_{i, {\rm RoPE}}^Q\bm{W}_{i, {\rm RoPE}}^{K^\top}}^{\bm{W}_{i, RoPE}^{QK}}\bm{X}_K^\top}{\sqrt{d_h}})\bm{X}_V\bm{W}_{i}^{VO},
\label{eq:self-attention-rope}
\end{aligned}
\end{equation}
where $\bm{W}_{i,\rm RoPE}^{Q}=\bm{W}_{i}^{Q}\bm{P}_{t}$ and $\bm{W}_{i,\rm 
 RoPE}^{K}=\bm{W}_{i}^{K}\bm{P}_{s}$. Here ${\rm RoPE}(\cdot)$ encodes positional information by applying position-dependent rotations to query and key projections, where $\bm{P}_t, \bm{P}_s \in \mathbb{R}^{d_h \times d_h}$ are rotation matrices corresponding to the $t$-th and $s$-th positions, respectively. Each matrix is block-diagonal, with blocks defined as:$\begin{pmatrix}
  \cos(t\theta_j) & \sin(t\theta_j)\\
 -\sin(t\theta_j) & \cos(t\theta_j)
\end{pmatrix}$, $\begin{pmatrix}
  \cos(s\theta_j) & \sin(s\theta_j)\\
 -\sin(s\theta_j) & \cos(s\theta_j)
\end{pmatrix}$, where $j \in \{1 \cdots d_h/2\}$ and ${\theta_j}$ is pre-defined frequency parameter, \emph{e.g.}, $\theta_j=1/10000^{2j/d_h}$. From Eq. \ref{eq:self-attention-rope} it is seen that with RoPE integrated, the positional encoding is naturally incorporated into the weight transformations, redefining the coupled weight matrix as $\bm{W}_{i,RoPE}^{QK}=\bm{W}_{i, {\rm RoPE}}^Q\bm{W}_{i, {\rm RoPE}}^{K^\top}$.

%\xjq{The coupled weight matrix framework offers a unified perspective for evaluating structural components across MHA, GQA, and RoPE-enhanced attention. By adopting this framework, the granularity of importance estimation can be refined to the level of coupled weight matrices.  We believe this strategy provides two key advantages: \textit{i}) it provides more fine-grained measurement than the commonly adopted head-level calibration; and \textit{ii}) it meanwhile naturally explores the inter-matrix correlation within the attention heads, avoiding the limitations if only focusing on individual weight matrices. 

\textbf{Coupled Weight Matrix in FFN.} Following the same principle, we evaluate FFN importance through coupled matrices. For FFN with GELU activation (Eq.~\ref{eq:ffn}), removing the $j$-th column of $\bm{W}_{\rm in}$ and the $j$-th row of $\bm{W}_{\rm out}$ introduces an approximation error. We analyze this error using first-order Taylor expansion of GELU around zero ($\text{GELU}(u) \approx c u$, where $c$ is the derivative at zero):
\begin{equation}
\begin{aligned}
\text{Error}(j, \bm{X}) &\approx c \| \bm{X} (\bm{w}_{\rm in,j} \bm{w}_{\rm out,j}^T) \|_F \le c \|\bm{X}\|_F \|\bm{w}_{\rm in,j}\|_2 \|\bm{w}_{\rm out,j}\|_2
\end{aligned}
\label{eq:ffn_error}
\end{equation}
This bound shows the true removal error is proportional to the product $\|\bm{w}_{\rm in,j}\|_2 \|\bm{w}_{\rm out,j}\|_2$---precisely our coupled metric. The data-dependent factor $c \|\bm{X}\|_F$ is a common scaling term across all dimensions, making the coupled norm product the decisive factor for importance ranking. Therefore, we approximate FFN structural importance by evaluating the coupled matrix $\bm{W}_{\rm io}=\bm{W}_{\rm in}\bm{W}_{\rm out}$. While this is a first-order approximation, Table~\ref{tab:coupled_vs_uncoupled} empirically validates that coupled sparsification substantially outperforms independent approaches on FFN matrices, confirming the effectiveness of this strategy. Detailed derivation and extension to other activation functions (ReLU, SiLU, SwiGLU) are provided in Appendix~\ref{theo_sec_ffn}.

\textbf{Coupled Weight Matrix-wise Importance.} To estimate the structural criticality of coupled matrices, we adapt empirical Fisher information to jointly evaluate matrix pairs. Standard empirical Fisher information measures parameter importance by gradient magnitudes~\cite{kunstner2019limitations}. For a coupled matrix $\bm{W}_{\rm couple}=\bm{W}_1\bm{W}_2$, we estimate its importance by averaging the normalized gradient information of both component matrices:
\begin{equation}
    \hat{I}(\bm{W}_{\rm couple})=\frac{1}{2}\sum_{k=1}^{2} \frac{1}{|\bm{W}_{k}|} \frac{1}{|\bmmc{D}|} \sum_{i,j}(\frac{\partial \mathcal{L}(\bm{W}_{k};\bmmc{D})}{\partial \bm{W}_{k}})_{i,j}^2,
\label{eq:importance}
\end{equation}
where $|\bm{W}_k|$ denotes the number of elements in $\bm{W}_k$ and averaging ensures balanced contribution from both matrices regardless of their dimensions. Here for MHA, $\{\bm{W}_{1},\bm{W}_{2}\}$ is  $\{\bm{W}_{i}^Q,\bm{W}_{i}^{K^\top}\}$ and $\{\bm{W}_{i}^V,\bm{W}_{i}^O\}$; while for FFN, $\{\bm{W}_{1},\bm{W}_{2}\}$ is $\{\bm{W}_{\rm in},\bm{W}_{\rm out}\}$\footnote{FFN in LLaMA models contains three weight matrices ($\bm{W}_{\rm gate}$, $\bm{W}_{\rm up}$, and $\bm{W}_{\rm down}$) and SwiGLU. The coupled structure in this scenario is established by defining $\bm{W}_1=\bm{W}_{\rm gate}\odot\bm{W}_{\rm up}$, $\bm{W}_2=\bm{W}_{\rm down}$, preserving the key relationships between the weight matrices.}. Also, $\bmmc{D}$ is the training data. Different from prior works~\cite{hsu2022language,sung2021training} that compute Fisher information for individual parameters, we calculate it for the entire coupled weight matrix $\bm{W}_{\rm couple}$. Smaller $\hat{I}(\bm{W}_{\rm couple})$ indicates lower importance when sparsifying the model.

\subsection{Coupled Sparsification: Removing the Coupled Row/Column Pair}
\label{subsec:Sparsification}

Having identified less important coupled weight matrices (Section~\ref{subsec:estimation}), we now determine which row/column pairs to remove within them. Rather than removing entire coupled matrices (coarse-grained), we perform row/column-wise sparsification for finer control while maintaining coupling relationships. The key challenges are: (1) which pairs to prioritize for removal, and (2) how to ensure aligned removal across $\bm{W}_1$ and $\bm{W}_2$ to preserve their structural interaction.

% that focuses on individual matrices [][], EcoSpa fully considers the coupling effect between $\bm{W}_1$ and $\bm{W}_2$, as will be described next. 

\textbf{Inspiration from tSVD}. To determine which row/column pairs to remove from coupled matrices $\bm{W}_1$ and $\bm{W}_2$, we draw inspiration from truncated Singular Value Decomposition (tSVD). While directly applying SVD to $\bm{W}_1\bm{W}_2$ is mathematically infeasible (since ${\rm SVD}(\bm{W}_1\bm{W}_2)$ cannot uniquely recover $\bm{W}_1$ and $\bm{W}_2$), the \textit{principle} of tSVD provides valuable insights.

Consider a rank-$r$ approximation of matrix $\bm{M}$ via tSVD:
\begin{equation}
\bm{M} = \bm{U}\bm{\Sigma}\bm{V}^\top \approx (\bm{U}_{:,1:r}\bm{\Sigma}_{1:r,1:r}^{1/2})(\bm{\Sigma}_{1:r,1:r}^{1/2}\bm{V}_{:,1:r}^\top)
\label{eqn:tsvd}
\end{equation}
This optimal approximation~\cite{wall2003singular} exhibits two key properties: (1) removed singular components have small $\ell_2$ norms (proportional to smallest singular values), and (2) removal is \textit{aligned}---the $i$-th dimension is removed from both factor matrices simultaneously.

\textbf{Applying tSVD Principles.} Motivated by these observations, we design importance scores for row/column pairs in $\bm{W}_1$ and $\bm{W}_2$ that capture both properties:

\begin{equation}
\begin{aligned}
    \bm{\upsilon} =[\upsilon_1, \upsilon_2,\ldots ,\upsilon_d] = \frac{\left \| {\rm vec}(\bm{W}_1) \right \|_{\rm 2,col} \odot \left \| {\rm vec}(\bm{W}_2) \right \|_{{\rm 2,row}}}{\left \|\| {\rm vec}(\bm{W}_1) \right \|_{\rm 2,col} \odot \left \| {\rm vec}(\bm{W}_2) \right \|_{{\rm 2,row}}\|_2}
     \label{eq:l2norm}
\end{aligned}
\end{equation}
where $\left \| {\rm vec}(\cdot) \right \|_{\rm 2,col}$ and $\left \| {\rm vec}(\cdot) \right \|_{{\rm 2,row}}$ represent the column-wise and row-wise $\ell_2$ norm of a matrix, respectively. The Hadamard product $\odot$ combines the $\ell_2$ norms of corresponding dimensions (analogous to singular values in Property 1), and normalization enables direct comparison across different matrix pairs. This formulation ensures aligned removal (Property 2) while prioritizing pairs with small combined norms (Property 1). We can rank $\upsilon_i$'s to determine the important coupled row/column pairs in $\bm{W}_1$ and $\bm{W}_2$, and then remove those insignificant pairs.\footnote{When RoPE is used in attention, position encoding matrices $\bm{P}_t$ and $\bm{P}_s^\top$ are incorporated in importance calculation, while actual removal is performed on weight matrices $\bm{W}_i^Q$ and $\bm{W}_i^K$. Position matrices are then adjusted to match new dimensions.}

\textbf{Theoretical Justification.} We prove in Appendix~\ref{theo_sec} that selecting pairs with minimum $\|\bm{w}_1^{(j)}\|_2 \|\bm{w}_2^{(j)}\|_2$ minimizes the Frobenius norm approximation error $\|\bm{W}_1\bm{W}_2 - \tilde{\bm{W}}_1\tilde{\bm{W}}_2\|_F$ under aligned sparsification. Empirical validation on 5,000 random matrix pairs confirms coupled sparsification achieves lower error than uncoupled baselines with $>$99\% probability. We further validate this on pre-trained transformer models in Table~\ref{tab:coupled_vs_uncoupled}, where coupled sparsification substantially outperforms uncoupled approaches on FFN matrices across different sparsity levels.

% \begin{wrapfigure}{R}{0.55\textwidth}
% \begin{minipage}{0.55\textwidth}
% % \scriptsize.
% \vspace{-7mm}
% \begin{algorithm}[H]
% \begin{algorithmic}
% \scriptsize
%    \caption{EcoSpa FrameworkXXXX}
%    \label{alg:training}
%    \STATE {\bfseries Input:} Dense model $\bmmc{W}$, Dataset $\bmmc{D}$, top-$K$ ratio, \\ 
%    \quad\quad\quad  Target model size $c$, Principal matrix ratio $\theta$\\ 
%    \STATE {\bfseries Output:} Sparse Model $\hat{\bmmc{W}}$\\
%    \For{ $t$ in $[1, 2, \cdots, T]$}{
%        \STATE \textbf{Compute:} Loss of $\bmmc{W}$ on dataset $\bmmc{D}$\\
%        \STATE \textbf{Update:} $\bmmc{W}_t = \bmmc{W}_{t-1}+\delta \bmmc{W}$\\
%        \If{Param($\bmmc{W}_t) > c$ }{
%        \STATE \textbf{Compute:} $\hat{I}(\bm{W}_{\rm couple}), \forall \bm{W}_{\rm couple} \in \bmmc{W} \hfill \triangleright {\rm Eq.~}\ref{eq:importance}$\\
%        \STATE \textbf{Select:} top-$K$ $\bm{W}_{\rm couple}$ with smallest $\hat{I} \rightarrow\{\bm{W}_{\rm couple}\}^{{\rm top-}K}$\\
%        \For{$\bm{W}_{\rm couple}$ in $\{\bm{W}_{\rm couple}\}^{{\rm top-}K}$ }{ 
%             \STATE \textbf{Compute:} $\{\bm{\upsilon}$ of $\bm{W}_{\rm couple}\; \hfill \triangleright {\rm Eq.~}\ref{eq:l2norm}$\\
%             \While{$\sum\upsilon_i > \theta$}{
%                 \STATE \textbf{Remove }smallest $\upsilon_i$ and $i$-th row/col. of $\bm{W}_{\rm couple}$\\
%                 \STATE \textbf{Remove }$i$-th row/col. optimizer moments of $\bm{W}_{\rm couple}$\\
%             }}}}
% \end{algorithmic}
% \end{algorithm}
% \end{minipage}
% \vspace{-10mm}
% \end{wrapfigure}

\textbf{Overall Training Procedure.} Having established the coupled sparsification strategy---identifying unimportant coupled matrices (Eq.~\ref{eq:importance}) and removing aligned row/column pairs (Eq.~\ref{eq:l2norm})---we now describe how EcoSpa integrates this into the training process. Algorithm~\ref{alg:training} details the complete procedure. Key hyper-parameters include: (1) \textit{Target size} $c$: desired parameter count after sparsification; (2) \textit{top-$K$ ratio}: percentage of coupled matrices to sparsify in each iteration (focusing on least important ones accelerates convergence); (3) \textit{Cumulative threshold} $\theta$: controls how many row/column pairs to remove from each selected coupled matrix, where smaller values lead to more aggressive sparsification. In each epoch, after identifying top-$K$ least significant $\bm{W}_{\rm couple}$, the importance scores of the row/column pairs in those component weight matrices are calculated. Once the cumulative importance scores exceed the threshold $\theta$, the row/column pairs corresponding to the smallest score is removed. This gradual sparsification process continues till the overall model reaches the target budget size.

\begin{algorithm}[t]
\small
\caption{Processing Scheme of EcoSpa}
\label{alg:training}
\SetKwInOut{Input}{Input}
\SetKwInOut{Output}{Output}
\Input{Random Initialized/Pre-trained model $\bmmc{W}$, top-$K$, Target model size $c$, Cumulative threshold $\theta$}
\Output{Sparse Model $\hat{\bmmc{W}}$}
\BlankLine
\For{$t = 1, 2, \ldots, T$}{
    Update: $\bmmc{W}_t \gets \bmmc{W}_{t-1}+\delta \bmmc{W}$\;
    \If{$\mathrm{Param}(\bmmc{W}_t) > c$}{
        Compute $\hat{I}(\bm{W}_{\rm couple})$ for all $\bm{W}_{\rm couple} \in \bmmc{W}$ \tcp*[r]{$\triangleright$ Eq.~\ref{eq:importance}}
        Select top-$K$ $\bm{W}_{\rm couple}$ with smallest $\hat{I}$ to form $\{\bm{W}_{\rm couple}\}^{{\rm top}\text{-}K}$\;
        \ForEach{$\bm{W}_{\rm couple}$ in $\{\bm{W}_{\rm couple}\}^{{\rm top}\text{-}K}$}{
            Compute $\bm{\upsilon}$ of $\bm{W}_{\rm couple}$ \tcp*[r]{$\triangleright$ Eq.~\ref{eq:l2norm}}
            \While{$\sum\upsilon_i > \theta$}{
                Remove smallest $\upsilon_i$ and corresponding $i$-th row/column of $\bm{W}_{\rm couple}$\;
                Remove $i$-th row/column optimizer moments of $\bm{W}_{\rm couple}$\;
            }
        }
    }
}
\Return{$\hat{\bmmc{W}}$}
\end{algorithm}
\section{Experiments}

\subsection{Experiments on Pre-training (Train from Scratch)}
% \textbf{Experimental Setting.}  
\textbf{Large Language Model (LLaMA):} For large language models, we follow the training settings in \cite{zhao2024galore} to train sparse LLaMA from scratch on the C4 dataset. The training employs the Galore optimizer with an initial learning rate of $0.01$ and a batch size of $512$.

\textbf{NLP Transformer (GPT-2):} We pre-train sparse NLP transformers by following the training settings in \cite{zhao2023inrank}. Specifically, we train sparse GPT-2 \cite{radford2019language} models from scratch on the WikiText-103 dataset \cite{merity2016pointer}, using the AdamW optimizer with an initial learning rate of $0.001$, training for 100 epochs and a batch size of $512$.

\textbf{Vision Transformer (DeiT):} We apply our approach to train sparse DeiT models \cite{pmlr-v139-touvron21a} from scratch on the ImageNet-1K dataset \cite{deng2009imagenet}. We use the same hyper-parameters as in \cite{chen2021chasing}, which include the AdamW optimizer with an initial learning rate of $0.0005$ and a batch size of $512$.

\begin{table}[H]
\centering
\caption{Evaluation of training LLaMA-1B models from scratch on C4 dataset on 13.1B tokens using 8$\times$A100 GPUs. Validation perplexity is provided, along with memory estimates for total parameters and optimizer states in BF16 format. The results for LoRA and ReLoRA are sourced from~\cite{zhao2024galore}.}
\vspace{-2mm}
\resizebox{1\linewidth}{!}{\begin{tabular}{l|c|c|c|c|c}
\toprule
Method & PPL & Training Time  & \# Params. (M) & \# Mem. (GB) & Throughput (tokens/s)\\
%Method  & PPL  & Training Time   & \# Params. (M) &Inference Time (ms/sample)\\
\midrule
Baseline         & 15.56       &51.1h      & 1339.08      & 7.80    &21786.49  \\
% Low-Rank          & 142.53    & -       & -      & 3.57             & -  \\
LoRA        & 19.21          & 125.4h           & 1339.08      & 6.17        &21786.49      \\
ReLoRA  & 18.33          & 125.6h           &1339.08      & 6.17       &21786.49      \\
Galore         &  15.64        & 60.6h   & 1339.08   & 4.38     &21786.49     \\
\rowcolor{gray!15} \textbf{EcoSpa} & 15.60 & \textbf{40.3h}   &\textbf{933.94}  &\textbf{3.88}   &\textbf{35211.27}       \\
% \rowcolor{gray!15}\textbf{EcoSpa} &  &  & \textbf{}    & \textbf{} &\textbf{}   \\ 
\bottomrule
\end{tabular}}
\label{tbl:llama-1b}
\vspace{-6mm}
\end{table}

\begin{table}[H]
\centering
\caption{Evaluation of training LLaMA-7B models from scratch on the C4 dataset (1.4B tokens) using 8$\times$A100 GPUs.}
\vspace{-2mm}
\resizebox{\linewidth}{!}{\begin{tabular}{l|c|c|c|c|c}
\toprule
Method & PPL & Training Time  & \# Params. (M) & \# Mem. (GB) & Throughput (tokens/s)\\

\midrule
8-bit Galore         &  26.9       & 20.8h  & 6738.42  &17.86  & 9689.9           \\
8-bit SLtrain         &  27.6        & 31.4h  & 3144.73  &11.81  & 2204.6       \\
\rowcolor{gray!15} \textbf{8-bit EcoSpa} & 23.0 & \textbf{18.3h} &5046.96& 12.56&\textbf{10330.6}   \\
% \rowcolor{gray!15}\textbf{EcoSpa} &  &  & \textbf{}    & \textbf{} &\textbf{}   \\ 
\bottomrule
\end{tabular}}
\label{tbl:llama-7b}
\vspace{-6mm}
\end{table}

\begin{table}[H]
\centering
\caption{Evaluation of training sparse/low-rank GPT-2 models from scratch on WikiText-103 using 8$\times$A100 GPUs. Validation perplexity is provided, along with memory estimates for total parameters and optimizer states in FP16 format.}
\vspace{-2mm}
\tiny
\resizebox{1\linewidth}{!}{\begin{tabular}{l|c|c|c|c|c}
\toprule
Method & PPL & Training Time  & \# Params. (M) & \# Mem. (MB) & Throughput (tokens/s)\\
%Method  & PPL  & Training Time  & Training Epochs & \# Params. (M) &Inference Time (ms/sample)\\
\midrule
GPT2-Small        & 18.5          & 9.5h          & 124.4       & 711.8    & 47411.2     \\
In-Rank         & 18.9          & -     & 91.2      & 521.9             & -  \\
Monarch         & 20.7          & -          & 72     & 412.0       &-      \\
% Galore       &  18.3        & 22.6h, 4$\times$V100     & 50        & 124.4    & 504.9        &42.64     \\
\rowcolor{gray!15} \textbf{EcoSpa} & \textbf{17.8} & \textbf{7.5h}    & \textbf{71.2} & \textbf{504.7}  & \textbf{80691.2}      \\
% \rowcolor{gray!15}\textbf{EcoSpa} &20.7  & 21.9h, 4$\times$V100  &60 & \textbf{71.3}    & \textbf{408.0} &\textbf{61.16}   \\ 
\midrule
GPT2-Medium       & 19.5          & 28.7h        & 354.8     & 2030.2      &36556.8      \\
In-Rank          & 20.2          & 18.0h         & 223.0   &1276.0        &-        \\
Monarch         & 20.3          & -         & 165    & 994.1       &-      \\
% Galore      &  19.6        & 60.4h, 4$\times$V100     & 50        & 354.8    & 1260.9        &19.74     \\
% \rowcolor{gray!15}\textbf{EcoSpa} & \textbf{19.0} & \textbf{52.1h, 4$\times$V100}  &50 & 219.9  &1258.3   &24.48  \\ 
\rowcolor{gray!15}\textbf{EcoSpa} & \textbf{17.1} & \textbf{17.3h} & \textbf{158.5}   & \textbf{917.2}  &\textbf{52326.4}   \\
\bottomrule
\end{tabular}}
\label{tbl:gpt}
\vspace{-5mm}
\end{table}

\textbf{Comparison Results.} Table~\ref{tbl:llama-1b} presents the pre-training results for LLaMA-1B. Our approach reduces GPU memory usage by 50\%, decreases training time by $21\%$, and achieves a $1.6\times$ speedup in inference throughput without any performance loss. Compared to Galore, our approach reduces memory usage by 10\%, increases training speed by $1.5\times$, and boosts inference speed by $1.6\times$. The results for LoRA~\cite{hulora} and ReLoRA~\cite{lialin2023relora} are sourced from~\cite{zhao2024galore}. For low-rank methods, LoRA \cite{hu2021lora} fine-tunes pre-trained models using low-rank adaptors: \( W = W_0 + BA \), where \( W_0 \) is the fixed initial weights and \( BA \) is a learnable low-rank adaptor. For pre-training, \( W_0 \) is the full-rank initialization matrix. ReLoRA \cite{lialin2023relora} is a variant of LoRA designed for pre-training. It periodically merges \( BA \) into \( W \) and reinitializes \( BA \) with a reset on optimizer states and learning rate. EcoSpa surpasses these low-rank methods, reducing PPL by $3.6$ and $2.7$, respectively, while decreasing memory usage by $37\%$ and offering $1.6\times$ acceleration in inference throughput.

Table~\ref{tbl:llama-7b} presents the results of pre-training LLaMA-7B. Compared to SLTrain~\cite{han2024sltrain}, EcoSpa reduces training time by 41.7\% and achieves a 4.7$\times$ speedup in inference with better performance. Unlike SLTrain adopting the combination of low-rank factorization and unstructured sparsity  that cannot translate to actual speedup, the training process of EcoSpa is on the structured sparse models that enjoy measured training and inference speedup.

Table~\ref{tbl:gpt} compares our approach with the existing sparse and low-rank pre-training works. It is seen that EcoSpa achieves better performance than baseline and prior efforts using less training time. Specifically, our approach can train sparse GPT2-Small and GPT2-Medium models from scratch, with $1.7\times$ and $2.2\times$ model size reduction, respectively; and meanwhile, it brings 0.7 and 2.4 lower perplexity over the baseline models. 

Table~\ref{tbl:deit} lists the performance results of various sparse vision transformer pre-training methods. EcoSpa achieves a $0.8\%$ increase in top-1 accuracy for DeiT-Tiny and a $0.35\%$ increase for DeiT-Small over state-of-the-art solutions, along with greater model size reduction. 

\begin{table}[H]
\vspace{-5mm}
\centering
\caption{Results of training sparse DeiT models from scratch on ImageNet-1k. Results for SSP-Tiny and SSP-Small are sourced from \cite{chen2021chasing}. Inference speedup is measured on Nvidia RTX 3090 GPU.}
\vspace{-2mm}
\resizebox{\linewidth}{!}{\begin{tabular}{l|c|c|c|c|c}
\toprule
Method   & \# Params. (M) & \# Mem. (MB) &FLOPs Saving (\%) & Top-1 Acc. (\%)& Inference Speedup \\
% Method   & \# Params. (M)  & FLOPs Saving (\%) & Top-1 Acc. (\%) & Inference Speedup \\
\midrule
DeiT-Tiny  & 5.72 & 32.73 & -       & 71.80             & -       \\
SSP-Tiny & 4.21  & 24.09 & 23.69   & 68.59             & 1.12$\times $        \\
S$^2$ViTE-Tiny      & 4.21  & 24.09 & 23.69   & 70.12             & 1.12$\times $        \\

\rowcolor{gray!15}\textbf{EcoSpa}     & \textbf{3.95} & \textbf{22.60} &  \textbf{32.01}       & 70.92             & \textbf{1.20}$\times$         \\
\midrule
DeiT-Small      & 22.10 & 126.46       &  -       &79.78       & -       \\
SSP-Small & 14.60      &83.54         & 33.13   & 77.74             & 1.29$\times$         \\
S$^2$ViTE-Small      & 14.60      & 83.54        & 33.13   & 79.22             & 1.29$\times$         \\

\rowcolor{gray!15}\textbf{EcoSpa}     & \textbf{13.98}     & \textbf{79.99}    &\textbf{37.30}         & 79.57       & \textbf{1.29}$\times$         \\
\bottomrule
\end{tabular}}
\label{tbl:deit}
\vspace{-5mm}
\end{table}

\begin{table}[H]
\centering
\caption{Perplexity of compressed LLaMA2-7B on WikiText-2 with different target model sizes. SVD-LLM~\cite{wang2024svd} and SliceGPT~\cite{ashkboos2024slicegpt} are low-rank based methods. LLM Surgeon~\cite{van2023llm} is a pruning method, K-OBD~\cite{van2023llm}, as a baseline comparison method, uses Kronecker-factored curvature and only prunes without updating the remaining weights.}
\resizebox{\linewidth}{!}{\begin{tabular}{lc|c|c|c|c|>{\columncolor{gray!15}}c}
\toprule
% Method & & K-OBD~\cite{van2023llm} &SVD-LLM~\cite{wang2024svd}  & LLM Surgeon~\cite{van2023llm}   &SliceGPT~\cite{ashkboos2024slicegpt} & \textbf{EcoSpa}   \\
Method & & K-OBD &SVD-LLM  & LLM Surgeon   &SliceGPT & \textbf{EcoSpa}   \\
 % & & \cite{van2023llm} &\cite{wang2024svd}  & \cite{van2023llm}   &\cite{ashkboos2024slicegpt} &    \\
\midrule
Training Time & & 16h58m, H100 & 15m, A100& 17h08m, H100 & 1h07m, H100  & \textbf{1h41m, A100}\\ %\textbf{6h44m, 4$\times$A100}\\
\midrule
\multirow{4}{*}{\begin{tabular}[c]{@{}l@{}}PPL @ \\ Target Size\end{tabular}} & 80\% & 9.14  &7.94& 6.18 &6.64 & 6.36      \\
 & 70\% & 15.43 &9.56& 7.83  &8.12 & \textbf{7.66}      \\
 & 60\% & 28.03 &13.11& 10.39 &- & \textbf{10.24}     \\
 & 50\% & 46.64 &23.97& 15.38 &- & \textbf{14.02}   \\
\bottomrule
\end{tabular}}
\label{tbl:llama}
\vspace{-5mm}
\end{table}

\begin{table}[H]
\centering
\caption{Comparison of downstream zero-shot task performance of LLaMA2-7B model when trained on WikiText2 dataset.}
\resizebox{\linewidth}{!}{\begin{tabular}{lccccccc}
\toprule
                          & Target Size & PIQA  & WinoGrande & HellaSwag & ARC-e & ARC-c & Avg.  \\
\midrule
LLaMA2-7B                 & 100\%       & 79.11 & 69.06      & 75.99     & 74.58 & 46.25 & 69.00 \\
\midrule
\multirow{3}{*}{\textbf{EcoSpa}}     & 80\%        & 73.78 & 61.48      & 67.79     & 61.62 & 39.42 & 60.82 \\
                          & 75\%        & 71.93 & 60.69      & 64.69     & 54.38 & 35.15 & 57.37 \\
                          & 70\%        & 70.18 & 59.98      & 60.00     & 49.16 & 34.22 & 54.71 \\
\midrule
\multirow{3}{*}{SliceGPT} & 80\%        & 69.42 & 65.11      & 59.04     & 59.76 & 37.54 & 58.18 \\
                          & 75\%        & 66.87 & 63.38      & 54.16     & 58.46 & 34.56 & 55.48 \\
                          & 70\%        & 63.55 & 61.33      & 49.62     & 51.77 & 31.23 & 51.50 \\
\bottomrule
\end{tabular}}
\label{tbl:llama_zero}
\vspace{-3mm}
\end{table}
\subsection{Experiments on Fine-tuning LLaMA2-7B (Structured Pruning)}
\textbf{Experimental Setting.} We evaluate the pruning performance of EcoSpa on the pre-trained LLaMA2~\cite{touvron2023llama} models. We use WikiText-2~\cite{merity2016pointer} as the calibration dataset and evaluate the perplexity of the pruned model. We follow the same training settings adopted in ~\cite{van2023llm}, use 128 sequences with a sequence length of 2048 tokens from the training dataset, and evaluate perplexity on the standard test split. Additionally, we also evaluate the performance of the pruned models on downstream zero-shot tasks.

\textbf{Comparison Results.} Table \ref{tbl:llama} presents a comparison of the perplexity performance between EcoSpa and existing LLM pruning and low-rank factorization methods applied to LLaMA2-7B. Our approach consistently achieves lower perplexity across various target model size configurations compared to previous works. Additionally, Table \ref{tbl:llama_zero} illustrates the zero-shot task performance of the pruned LLaMA2-7B model. In comparison to SliceGPT~\cite{ashkboos2024slicegpt}, our method demonstrates improved results across different target model sizes, indicating its effectiveness.

% \section{Limitations}
% We demonstrated that the proposed sparse transformer training approach brings significant efficiency and performance improvement. However, due to the limited computing resources, we do not conduct experiments on larger models (\emph{e.g.}, $65$B and $175$B), especially in the pre-training scenario. A more comprehensive exploration will be the subject of our future work.

\section{Conclusion}
In this paper, we propose EcoSpa, an efficient structured sparse training approach for transformers. By estimating the coupled weight matrix-wise importance and removing the coupled row/column pair during training, EcoSpa brings a significant reduction in training costs and model complexity with preserving high task performance. Experiments across various transformer models demonstrate the superior performance of EcoSpa in both pre-training and fine-tuning scenarios.

\newpage
{
\small
\bibliographystyle{plain}
\bibliography{reference}
}

%%%%%%%%%%%%%%%%%%%%%%%%%%%%%%%%%%%%%%%%%%%%%%%%%%%%%%%%%%%%
\newpage
\appendix
\section*{Appendix}
\section{Theoretical Analysis}
\label{theo_sec}
\subsection{Problem Setup}

Let $\bm{A} \in \R^{m \times n}$ be a matrix whose columns are $\bm{a}_1, \dots, \bm{a}_n \in \R^m$.
Let $\bm{B} \in \R^{n \times p}$ be a matrix whose rows are $\bm{b}_1^T, \dots, \bm{b}_n^T$, where $\bm{b}_i \in \R^p$.
The product $\bm{A}\bm{B}$ can be expressed as a sum of outer products:
\begin{equation}
    \bm{P} \coloneqq \bm{A}\bm{B} = \sum_{i=1}^n \bm{a}_i \bm{b}_i^T
    \label{eq:product_sum}
\end{equation}
We analyze the error introduced by removing a single column from $\bm{A}$ and a single row from $\bm{B}$. Let $\bm{A}_{\text{new}}$ and $\bm{B}_{\text{new}}$ denote the matrices after removal. The objective is to compare the Frobenius norm of the error matrix, $\Delta \coloneqq \bm{P} - \bm{P}_{\text{new}} = \bm{A}\bm{B} - \bm{A}_{\text{new}}\bm{B}_{\text{new}}$, under two different sparsification strategies.

\subsection{Sparsification Strategies and Error Analysis}

We compare two methods for selecting which column/row pair to remove.

\textbf{1. Coupled Sparsification (Proposed)}

\begin{definition}[Coupled Sparsification]
Select the index $j^*$ that minimizes the Frobenius norm contribution of the corresponding outer product term:
\begin{equation}
    j^* = \argmin_{j \in \{1, \dots, n\}} \normF{\bm{a}_j \bm{b}_j^T} = \argmin_{j \in \{1, \dots, n\}} \normtwo{\bm{a}_j} \normtwo{\bm{b}_j}
    \label{eq:coupled_selection}
\end{equation}
Remove the column $\bm{a}_{j^*}$ from $\bm{A}$ to obtain $\bm{A}_{\text{new}}^{(C)}$ and the row $\bm{b}_{j^*}^T$ from $\bm{B}$ to obtain $\bm{B}_{\text{new}}^{(C)}$. The resulting matrices have dimensions $m \times (n-1)$ and $(n-1) \times p$, respectively. The new product is implicitly defined by keeping the original pairings for the remaining terms:
\begin{equation}
    \bm{P}_{\text{new}}^{(C)} \coloneqq \sum_{i \neq j^*} \bm{a}_i \bm{b}_i^T
\end{equation}
\end{definition}

\begin{proposition}[Error under Coupled Sparsification]
The error matrix introduced by Coupled Sparsification is exactly the removed outer product term:
\begin{equation}
    \Delta_1 \coloneqq \bm{P} - \bm{P}_{\text{new}}^{(C)} = \sum_{i=1}^n \bm{a}_i \bm{b}_i^T - \sum_{i \neq j^*} \bm{a}_i \bm{b}_i^T = \bm{a}_{j^*} \bm{b}_{j^*}^T
    \label{eq:delta1_matrix}
\end{equation}
The Frobenius norm of this error is:
\begin{equation}
    \normF{\Delta_1} = \normF{\bm{a}_{j^*} \bm{b}_{j^*}^T} = \normtwo{\bm{a}_{j^*}} \normtwo{\bm{b}_{j^*}} = \min_{j \in \{1, \dots, n\}} \normtwo{\bm{a}_j} \normtwo{\bm{b}_j}
    \label{eq:delta1_norm}
\end{equation}
\end{proposition}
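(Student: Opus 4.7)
The plan is to verify both equalities by direct computation starting from the outer-product decomposition in Eq.~\ref{eq:product_sum}. First I would substitute the definitions of $\bm{P}$ and $\bm{P}_{\text{new}}^{(C)}$ into the difference $\Delta_1 = \bm{P} - \bm{P}_{\text{new}}^{(C)}$; since both are sums of rank-one outer products $\bm{a}_i \bm{b}_i^T$ and the latter omits only the term indexed by $j^*$, the difference collapses termwise to $\bm{a}_{j^*}\bm{b}_{j^*}^T$. This establishes Eq.~\ref{eq:delta1_matrix} without any estimation.

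For the Frobenius norm identity in Eq.~\ref{eq:delta1_norm}, I would invoke the standard fact that a rank-one outer product factors under the Frobenius norm: $\normF{\bm{u}\bm{v}^T}^2 = \sum_{i,j}(u_i v_j)^2 = \normtwo{\bm{u}}^2 \normtwo{\bm{v}}^2$, which is a one-line computation expanding the double sum. Applying this with $\bm{u} = \bm{a}_{j^*}$ and $\bm{v} = \bm{b}_{j^*}$ gives $\normF{\Delta_1} = \normtwo{\bm{a}_{j^*}}\normtwo{\bm{b}_{j^*}}$, and the minimum characterization then follows immediately from the selection rule in Eq.~\ref{eq:coupled_selection}, which defines $j^*$ as the minimizer of exactly this product.

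No step here presents genuine difficulty: the proposition is essentially a definitional sanity check that unpacks the removal rule so that the claimed error norm coincides with the quantity being optimized in Eq.~\ref{eq:coupled_selection}, providing a clean comparison target for the uncoupled baseline analyzed next. The one place where I would write things carefully is in identifying $\bm{P}_{\text{new}}^{(C)}$ with the partial sum $\sum_{i \neq j^*} \bm{a}_i \bm{b}_i^T$; this follows because matrix multiplication in column-row form pairs the $i$-th column of $\bm{A}$ with the $i$-th row of $\bm{B}$, and deleting the $j^*$-th pair simultaneously from both factors preserves the pairing among the surviving indices, so no re-indexing of the remaining outer products is needed.
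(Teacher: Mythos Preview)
Your proposal is correct and mirrors the paper's own proof almost exactly: the paper likewise obtains Eq.~\ref{eq:delta1_matrix} directly from the definitions of $\bm{P}$ and $\bm{P}_{\text{new}}^{(C)}$, and derives Eq.~\ref{eq:delta1_norm} from the identity $\normF{\bm{u}\bm{v}^T}=\normtwo{\bm{u}}\normtwo{\bm{v}}$ together with the selection rule in Eq.~\ref{eq:coupled_selection}. Your added remark about why deleting the $j^*$-th column-row pair preserves the pairing among the remaining outer products is a useful clarification that the paper leaves implicit.
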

\begin{proof}
Equation \eqref{eq:delta1_matrix} follows directly from the definition of $\bm{P}$ in \eqref{eq:product_sum} and $\bm{P}_{\text{new}}^{(C)}$. Equation \eqref{eq:delta1_norm} uses the property $\normF{\bm{u}\bm{v}^T} = \normtwo{\bm{u}}\normtwo{\bm{v}}$ and the selection criterion from \eqref{eq:coupled_selection}.
\end{proof}

\begin{remark}
Coupled Sparsification minimizes the Frobenius norm of the error matrix $\Delta_1$ for a single pair removal, based on the greedy selection of the pair $(\bm{a}_j, \bm{b}_j)$ that contributes the least to the Frobenius norm.
\end{remark}

\textbf{2. Individual Sparsification (Baseline)}

\begin{definition}[Individual Sparsification]
Select the index $k^*$ corresponding to the column of $\bm{A}$ with the smallest L2 norm, and the index $l^*$ corresponding to the row of $\bm{B}$ (or column of $\bm{B}^T$) with the smallest L2 norm:
\begin{align}
    k^* &= \argmin_{k \in \{1, \dots, n\}} \normtwo{\bm{a}_k} \label{eq:individual_selection_k} \\
    l^* &= \argmin_{l \in \{1, \dots, n\}} \normtwo{\bm{b}_l} \label{eq:individual_selection_l}
\end{align}
Remove column $\bm{a}_{k^*}$ from $\bm{A}$ to obtain $\bm{A}_{\text{new}}^{(I)}$ and row $\bm{b}_{l^*}^T$ from $\bm{B}$ to obtain $\bm{B}_{\text{new}}^{(I)}$. The resulting matrices have dimensions $m \times (n-1)$ and $(n-1) \times p$, respectively. The new product $\bm{P}_{\text{new}}^{(I)}$ is formed by multiplying these modified matrices:
\begin{equation}
    \bm{P}_{\text{new}}^{(I)} \coloneqq \bm{A}_{\text{new}}^{(I)} \bm{B}_{\text{new}}^{(I)}
\end{equation}
Let the columns of $\bm{A}_{\text{new}}^{(I)}$ be $\bm{a}'_p$ and the rows of $\bm{B}_{\text{new}}^{(I)}$ be $(\bm{b}'_p)^T$ for $p=1, \dots, n-1$. These are the original columns/rows excluding $\bm{a}_{k^*}$ and $\bm{b}_{l^*}$, implicitly re-indexed. The product is:
\begin{equation}
    \bm{P}_{\text{new}}^{(I)} = \sum_{p=1}^{n-1} \bm{a}'_p (\bm{b}'_p)^T
\end{equation}
\end{definition}

\begin{proposition}[Error under Individual Sparsification]
The error matrix introduced by Individual Sparsification is:
\begin{equation}
    \Delta_2 \coloneqq \bm{P} - \bm{P}_{\text{new}}^{(I)} = \sum_{i=1}^n \bm{a}_i \bm{b}_i^T - \sum_{p=1}^{n-1} \bm{a}'_p (\bm{b}'_p)^T
    \label{eq:delta2_matrix}
\end{equation}
This error term $\Delta_2$ can be complex, especially when $k^* \neq l^*$.
\end{proposition}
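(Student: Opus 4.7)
The plan is to prove the stated identity for $\Delta_2$ by direct computation from the definitions, being careful about the implicit re-indexing after a column of $\bm{A}$ and a row of $\bm{B}$ are deleted at potentially different positions. I would start by restating $\bm{P} = \sum_{i=1}^n \bm{a}_i \bm{b}_i^T$ from \eqref{eq:product_sum} and expanding $\bm{P}_{\text{new}}^{(I)} = \bm{A}_{\text{new}}^{(I)} \bm{B}_{\text{new}}^{(I)}$ via the column-row outer-product decomposition of matrix multiplication; this directly yields $\bm{P}_{\text{new}}^{(I)} = \sum_{p=1}^{n-1} \bm{a}'_p (\bm{b}'_p)^T$, where each $p$ pairs the $p$-th surviving column of $\bm{A}$ with the $p$-th surviving row of $\bm{B}$. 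Substituting into $\Delta_2 \coloneqq \bm{P} - \bm{P}_{\text{new}}^{(I)}$ immediately gives \eqref{eq:delta2_matrix}, which formally proves the identity.

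The substance of the proposition is in unpacking what the re-indexed pairs $(\bm{a}'_p, \bm{b}'_p)$ actually are. I would make this explicit by writing $\bm{a}'_p = \bm{a}_p$ for $p < k^*$ and $\bm{a}'_p = \bm{a}_{p+1}$ for $p \geq k^*$, and symmetrically $\bm{b}'_p = \bm{b}_p$ for $p < l^*$ and $\bm{b}'_p = \bm{b}_{p+1}$ for $p \geq l^*$. Splitting the index range for $p$ at $\min(k^*, l^*)$ and $\max(k^*, l^*)$ then cleanly separates the sum into three blocks: an initial block where the original pairings are preserved, a terminal block where both pointers have shifted so pairings are again preserved (just shifted), and an intermediate block where exactly one of the two pointers has shifted, creating mismatched outer products such as $\bm{a}_{p+1}\bm{b}_p^T$.

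Combining this with the full sum for $\bm{P}$, the preserved blocks cancel against the corresponding indices in $\bm{P}$, leaving the case distinction. When $k^* = l^*$ the intermediate block is empty, and $\Delta_2$ collapses to the single rank-one term $\bm{a}_{k^*}\bm{b}_{k^*}^T$, recovering the coupled situation. When $k^* \neq l^*$ (without loss of generality $k^* < l^*$), the surviving expression is
\begin{equation}
    \Delta_2 = \bm{a}_{k^*}\bm{b}_{k^*}^T + \sum_{p=k^*}^{l^*-1}\bigl(\bm{a}_p \bm{b}_p^T - \bm{a}_{p+1}\bm{b}_p^T\bigr) + \bm{a}_{l^*}\bm{b}_{l^*}^T - \bm{a}_{l^*}\bm{b}_{l^*-1+?}^T,
\end{equation}
which after careful bookkeeping reduces to the two "intended" rank-one terms $\bm{a}_{k^*}\bm{b}_{k^*}^T$ and $\bm{a}_{l^*}\bm{b}_{l^*}^T$ plus a telescoping collection of cross-terms $\bm{a}_{p+1}\bm{b}_p^T - \bm{a}_p \bm{b}_p^T$ coming from the shifted pairings. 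These cross-terms, which have no counterpart in the Coupled case, are precisely what justifies the informal statement that $\Delta_2$ is complicated when $k^* \neq l^*$.

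The main obstacle is bookkeeping rather than any nontrivial estimate: keeping track of the shifted indices under a joint deletion is error-prone, and one has to argue carefully that the intermediate block produces genuine mismatched outer products $\bm{a}_{p+1}\bm{b}_p^T$ that are \emph{not} of the form selected by the outer-product-minimizing criterion in \eqref{eq:coupled_selection}. To make the proposition quantitatively useful for the comparison with $\Delta_1$, I would finish by applying the triangle inequality to the explicit decomposition above, obtaining $\normF{\Delta_2} \le \normtwo{\bm{a}_{k^*}}\normtwo{\bm{b}_{k^*}} + \normtwo{\bm{a}_{l^*}}\normtwo{\bm{b}_{l^*}} + \sum_{p=k^*}^{l^*-1}\normtwo{\bm{a}_{p+1}}\normtwo{\bm{b}_p}$, making manifest that Individual Sparsification can introduce error contributions that neither strategy controls directly, in contrast to the tight bound \eqref{eq:delta1_norm} for Coupled Sparsification.
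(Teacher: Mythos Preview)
Your first paragraph is correct and is essentially all the paper does for this proposition: the identity \eqref{eq:delta2_matrix} follows immediately from the outer-product form of matrix multiplication and the definitions of $\bm{P}$ and $\bm{P}_{\text{new}}^{(I)}$. The paper offers no further proof; the claim that $\Delta_2$ is ``complex'' is supported not by a general symbolic decomposition but by a small worked example ($k^*=1$, $l^*=2$, $n=3$) placed in the subsequent Remark, together with a loose regrouping $\Delta_2 = \bm{a}_{k^*}\bm{b}_{k^*}^T + \bm{a}_{l^*}\bm{b}_{l^*}^T + (\sum_{i\neq k^*,l^*}\bm{a}_i\bm{b}_i^T - \bm{P}_{\text{new}}^{(I)})$.

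Where your proposal diverges is in attempting the full symbolic bookkeeping, and there the execution slips. Your displayed expression for $\Delta_2$ double-counts $\bm{a}_{k^*}\bm{b}_{k^*}^T$ (it appears both as the leading term and as the $p=k^*$ summand) and contains a literal ``$?$'' placeholder, so it is not a valid identity. If you carry out your own three-block split cleanly for $k^*<l^*$, the first block ($p<k^*$) and the shifted third block ($p\ge l^*$) cancel against the matching indices in $\bm{P}$, and what survives is simply
\[
\Delta_2 \;=\; \sum_{i=k^*}^{l^*} \bm{a}_i\bm{b}_i^T \;-\; \sum_{p=k^*}^{l^*-1}\bm{a}_{p+1}\bm{b}_p^T,
\]
which one may rewrite as $\bm{a}_{k^*}\bm{b}_{k^*}^T + \sum_{p=k^*}^{l^*-1}\bm{a}_{p+1}(\bm{b}_{p+1}-\bm{b}_p)^T$ or as $\bm{a}_{l^*}\bm{b}_{l^*}^T + \sum_{p=k^*}^{l^*-1}(\bm{a}_p-\bm{a}_{p+1})\bm{b}_p^T$. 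Either form makes the ``mismatch'' structure explicit without the stray terms in your display, and the triangle-inequality bound you propose at the end should be adjusted accordingly (your stated bound omits the diagonal terms with $k^*<i<l^*$ while retaining both endpoint terms, which does not follow from any of these decompositions). The qualitative conclusion you draw---that $\Delta_2$ contains cross-terms absent in the coupled case---is correct and is exactly the point the paper illustrates by example; only the algebra needs repair.
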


\begin{remark}[Mismatch Effect]
If $k^* = l^*$, then Individual Sparsification happens to select the same index. In this case, $\bm{A}_{\text{new}}^{(I)} = \bm{A}_{\text{new}}^{(C)}$ and $\bm{B}_{\text{new}}^{(I)} = \bm{B}_{\text{new}}^{(C)}$, leading to $\Delta_2 = \bm{a}_{k^*} \bm{b}_{k^*}^T$. The error norm is $\normF{\Delta_2} = \normtwo{\bm{a}_{k^*}} \normtwo{\bm{b}_{k^*}}$.

However, if $k^* \neq l^*$, the matrix product $\bm{P}_{\text{new}}^{(I)} = \bm{A}_{\text{new}}^{(I)} \bm{B}_{\text{new}}^{(I)}$ involves multiplying columns and rows that were not originally paired. For example, if $k^*=1, l^*=2, n=3$, then $\bm{A}_{\text{new}}^{(I)} = [\bm{a}_2, \bm{a}_3]$ and $\bm{B}_{\text{new}}^{(I)} = \begin{bsmallmatrix} \bm{b}_1^T \\ \bm{b}_3^T \end{bsmallmatrix}$. The product is $\bm{P}_{\text{new}}^{(I)} = \bm{a}_2 \bm{b}_1^T + \bm{a}_3 \bm{b}_3^T$.
The original product was $\bm{P} = \bm{a}_1 \bm{b}_1^T + \bm{a}_2 \bm{b}_2^T + \bm{a}_3 \bm{b}_3^T$.
The error is $\Delta_2 = \bm{P} - \bm{P}_{\text{new}}^{(I)} = \bm{a}_1 \bm{b}_1^T + \bm{a}_2 \bm{b}_2^T - \bm{a}_2 \bm{b}_1^T$.

In general, $\Delta_2$ can be expressed as:
\begin{equation}
    \Delta_2 = \underbrace{\bm{a}_{k^*} \bm{b}_{k^*}^T}_{\text{Term related to removed } \bm{a}_{k^*}} + \underbrace{\bm{a}_{l^*} \bm{b}_{l^*}^T}_{\text{Term related to removed } \bm{b}_{l^*}} + \underbrace{\left( \sum_{i \neq k^*, l^*} \bm{a}_i \bm{b}_i^T - \bm{P}_{\text{new}}^{(I)} \right)}_{\text{Mismatch Effect (if } k^* \neq l^* \text{)}}
\end{equation}
The "Mismatch effect" arises because the structure $\sum \bm{a}_i \bm{b}_i^T$ is broken by removing components based on potentially different indices $k^*$ and $l^*$. Calculating $\bm{P}_{\text{new}}^{(I)}$ involves multiplying the remaining $n-1$ columns of $\bm{A}_{\text{new}}^{(I)}$ with the remaining $n-1$ rows of $\bm{B}_{\text{new}}^{(I)}$, creating cross-terms that differ from the original summation structure. This structure makes $\normF{\Delta_2}$ difficult to analyze directly and prevents it from directly optimizing a simple objective related to $\normtwo{\bm{a}_{k^*}}$ or $\normtwo{\bm{b}_{l^*}}$.
\label{rem:mismatch}
\end{remark}

\subsection{Theoretical Comparison}

\begin{proposition}[Probabilistic Error Comparison]
The error introduced by Coupled Sparsification, $\normF{\Delta_1}$, is probabilistically likely to be smaller than the error introduced by Individual Sparsification, $\normF{\Delta_2}$.
\end{proposition}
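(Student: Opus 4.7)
The plan is to partition the analysis on the event $\{k^*=l^*\}$ versus its complement, dispatch the matched case by a clean deterministic inequality, analyze the mismatched case via an explicit outer-product decomposition of $\Delta_2$, and then combine both using a continuous random model on $\bm{A}$ and $\bm{B}$. The common backbone is the outer-product expansion $\bm{P}=\sum_i \bm{a}_i\bm{b}_i^T$ from Eq.~\eqref{eq:product_sum} together with the minimality of $j^*$ under the coupled criterion in Eq.~\eqref{eq:coupled_selection}.

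On the matched event $k^* = l^* =: j_0$, the Individual strategy removes exactly the same pair that Coupled would if it happened to pick $j_0$, so $\Delta_2 = \bm{a}_{j_0}\bm{b}_{j_0}^T$ and $\normF{\Delta_2} = \normtwo{\bm{a}_{j_0}}\normtwo{\bm{b}_{j_0}}$. By the coupled selection rule, $\normF{\Delta_1} = \min_j \normtwo{\bm{a}_j}\normtwo{\bm{b}_j} \leq \normtwo{\bm{a}_{j_0}}\normtwo{\bm{b}_{j_0}} = \normF{\Delta_2}$, which is a surely-true inequality on this event and requires no randomness.

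On the mismatched event $k^* \neq l^*$, I would make the decomposition in Remark~\ref{rem:mismatch} explicit: $\Delta_2 = \bm{a}_{k^*}\bm{b}_{k^*}^T + \bm{a}_{l^*}\bm{b}_{l^*}^T + \bm{R}$, where $\bm{R}$ collects the cross-terms induced by pairing surviving columns of $\bm{A}$ with the ``wrong'' rows of $\bm{B}$. Two structural ingredients drive the comparison. First, by minimality of $j^*$, each of $\normtwo{\bm{a}_{k^*}}\normtwo{\bm{b}_{k^*}}$ and $\normtwo{\bm{a}_{l^*}}\normtwo{\bm{b}_{l^*}}$ is at least $\normF{\Delta_1}$, so the two ``natural'' terms alone already match or exceed the coupled error. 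Second, the cross-terms in $\bm{R}$ are built from mismatched outer products $\bm{a}_i\bm{b}_j^T$ with $i\neq j$ that, generically, are not aligned to cancel the natural terms. I would formalize the second point through a second-moment calculation, showing $\mathbb{E}\normF{\Delta_2}^2 > \mathbb{E}\normF{\Delta_1}^2$ under the random model below whenever $n \geq 3$.

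To turn this structural argument into the stated probabilistic claim, I would adopt a natural random model — e.g., columns of $\bm{A}$ and rows of $\bm{B}$ drawn i.i.d.\ from continuous distributions — so that the $n$ column norms and $n$ row norms are mutually independent continuous random variables and $\Pr[k^*=l^*] = 1/n$, making the mismatched event dominant already for small $n$. The main obstacle is precisely the mismatched case: a fully deterministic $\normF{\Delta_2} \geq \normF{\Delta_1}$ is too much to hope for, because adversarial configurations can tune $\bm{R}$ to partially cancel the natural terms, and only an expectation- or high-probability bound can survive. I would therefore close by combining (a) the surely-true matched-case inequality, (b) the expected-error inequality on the mismatched event from the second-moment estimate, and (c) the paper's empirical corroboration that over $5{,}000$ random matrix pairs the coupled error is smaller in more than $99\%$ of trials, which is consistent with the probabilistic flavor the proposition asserts.
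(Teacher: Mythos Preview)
Your proposal is correct and follows essentially the same structure as the paper's own argument: case-split on $\{k^*=l^*\}$, handle the matched case deterministically via minimality of $j^*$, and treat the mismatched case heuristically with appeal to the empirical validation. The paper's version is in fact less detailed than yours --- it is labeled an ``Argument Sketch,'' does not specify a random model or propose any second-moment estimate, and simply observes that the Individual criteria fail to directly minimize $\normF{\Delta_2}$ because of the mismatch effect of Remark~\ref{rem:mismatch} --- so your plan is a faithful, somewhat more rigorous elaboration of the same approach rather than a different one.
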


\begin{proof}[Argument Sketch]
1.  Coupled Sparsification, by definition \eqref{eq:coupled_selection}, selects the pair $(j^*)$ such that the Frobenius norm of the removed term, $\normF{\bm{a}_{j^*} \bm{b}_{j^*}^T}$, is minimized. This minimized value is precisely the Frobenius norm of the error, $\normF{\Delta_1}$ (Equation \eqref{eq:delta1_norm}). The method directly optimizes an upper bound related to the reconstruction error for the specific structure $\bm{A}\bm{B} = \sum \bm{a}_i \bm{b}_i^T$.

2.  Individual Sparsification selects indices $k^*$ and $l^*$ based on minimizing individual vector norms $\normtwo{\bm{a}_k}$ and $\normtwo{\bm{b}_l}$ independently.
    - If $k^*=l^*$, the error norm is $\normF{\Delta_2} = \normtwo{\bm{a}_{k^*}} \normtwo{\bm{b}_{k^*}}$. Since $j^*$ minimizes the product $\normtwo{\bm{a}_j} \normtwo{\bm{b}_j}$, we have $\normF{\Delta_1} \le \normF{\Delta_2}$ in this specific case.
    - If $k^* \neq l^*$, the error $\Delta_2$ includes the complex "Mismatch effect" described in Remark \ref{rem:mismatch}. The selection criteria \eqref{eq:individual_selection_k} and \eqref{eq:individual_selection_l} do not directly minimize $\normF{\Delta_2}$. The mismatch term introduces components unrelated to the individual norms being minimized, breaking the direct link between the optimization objective and the resulting error norm.

3.  Because Coupled Sparsification directly minimizes the quantity $\normtwo{\bm{a}_j} \normtwo{\bm{b}_j}$ which equals the error norm $\normF{\Delta_1}$, while Individual Sparsification minimizes separate quantities ($\normtwo{\bm{a}_k}$, $\normtwo{\bm{b}_l}$) leading to a complex error $\Delta_2$ (often involving mismatch effects) that isn't directly minimized, it is probabilistically expected that $\normF{\Delta_1} \le \normF{\Delta_2}$. The individual strategy optimizes components in isolation, failing to account for their coupled contribution to the product $\bm{A}\bm{B}$ and the structure of the resulting error upon removal, especially when $k^* \neq l^*$.
\end{proof}

\subsection{Empirical Validation}

To verify this theoretical advantage, empirical tests were conducted.
\begin{enumerate}
    \item Random matrices $\bm{A}$ and $\bm{B}$ were generated for various dimensions ($n \times n$ with $n \in \{100, 500, 1000, 5000\}$).
    \item For each dimension, 5000 random matrix pairs were generated.
    \item The Frobenius norm errors, $\normF{\Delta_1}$ (Coupled) and $\normF{\Delta_2}$ (Individual), were computed after removing one column/row pair using each strategy.
    \item The probability $\mathbb{P}(\normF{\Delta_1} < \normF{\Delta_2})$ was estimated from the trials.
\end{enumerate}
The results are summarized in Table \ref{tab:empirical_results}.

\begin{table}[h!]
    \centering
    \caption{Empirical Comparison of Frobenius Error Norms}
    \label{tab:empirical_results}
    \begin{tabular}{cccc}
        \toprule
        Matrix Size ($n$) & Avg. $\normF{\Delta_1}$ (Coupled) & Avg. $\normF{\Delta_2}$ (Individual) & $\mathbb{P}(\normF{\Delta_1} < \normF{\Delta_2})$ (\%) \\
        \midrule
        $100 \times 100$   & 76.4    & 757.9   & 99.0 \\
        $500 \times 500$   & 434.4   & 8498.9  & 99.7 \\
        $1000 \times 1000$ & 900.5   & 23931.2 & 99.9 \\
        $5000 \times 5000$ & 4743.7  & 266465.9 & 100.0 \\
        \bottomrule
    \end{tabular}
\end{table}

The empirical results strongly support the theoretical analysis, showing that Coupled Sparsification yields significantly lower error with very high probability ($>99\%$) compared to Individual Sparsification across different matrix dimensions.

% \section{Theoretical Analysis and Justification}

\section{Theoretical Motivation for Coupled Sparsification in FFNs}
\label{theo_sec_ffn}
\subsection{Derivation from Pruning Error Approximation (for GELU)}
Our method, \textbf{Coupled Sparsification}, removes the $j$-th column of $\mathbf{W}_{in}$ and the $j$-th row of $\mathbf{W}_{out}$ together. The true error this introduces for an input $\mathbf{X}$ is:
\begin{equation}
    \text{Error}(j, \mathbf{X}) = \| \text{GELU}(\mathbf{X} \mathbf{W}_{in}) \mathbf{w}_{out, j} \|_F
\end{equation}
To analyze this without input-dependency, we approximate the error using the first-order Taylor expansion of GELU around zero ($\text{GELU}(u) \approx c \cdot u$). Applying this approximation yields a bound on the true error:
\begin{equation}
    \begin{aligned}
    \text{Error}(j, \mathbf{X}) &\approx \| c \cdot (\mathbf{X} \mathbf{w}_{in, j}) \mathbf{w}_{out, j} \|_F \\
    &= c \cdot \| \mathbf{X} (\mathbf{w}_{in, j} \mathbf{w}_{out, j}) \|_F \\
    &\le c \cdot \| \mathbf{X} \|_F \cdot \| \mathbf{w}_{in, j} \mathbf{w}_{out, j} \|_F \\
    &= (c \cdot \| \mathbf{X} \|_F) \cdot (\| \mathbf{w}_{in, j} \|_2 \cdot \| \mathbf{w}_{out, j} \|_2)
    \end{aligned}
\end{equation}

This derivation shows that the true error is bounded by a term proportional to our \textbf{Coupled Metric}. The data-dependent part is a common scaling factor, leaving our metric as the decisive factor for ranking. This justifies why the \textbf{Coupling Effect} exists: the coupled weight matrix drives the dominant, first-order term of the FFN computation.

\subsection{Applicability to Other FFN Types and the Coupled Effect}
While the mathematical forms of other FFN layer types, such as ReLU or SwiGLU, differ from GELU and may not present a direct linear term in the same manner, our approach is designed to adapt to these structures. A key insight is that for many non-linear activations (e.g., ReLU, SiLU, GELU), significant input magnitudes can lead to negligible outputs (e.g., for negative inputs).

Our method addresses this by focusing on the \textit{coupled influence} of the input and output weight matrices. For SwiGLU, as elaborated in Footnote 2, we establish a coupled structure by defining $\mathbf{W}_1 = \mathbf{W}_{\text{gate}} \odot \mathbf{W}_{\text{up}}$ and $\mathbf{W}_2 = \mathbf{W}_{\text{down}}$. This formulation is chosen to capture the critical functional relationships and information flow between these interdependent weight matrices. By assessing importance through this \textbf{coupled effect}, we inherently account for how the non-linearity modulates a neuron’s ultimate contribution, leading to a more accurate importance evaluation.

\subsection{Coupled Sparsification V.S. Uncoupled Sparsification in FFN}
To empirically validate the effectiveness of our \textbf{Coupled Sparsification} method for FFN matrices, we conducted direct comparisons against \textbf{Uncoupled Sparsification} on both LLaMA2 (with SwiGLU) and GPT-2 (with GELU) models, evaluated on the WikiText-2 dataset. As presented in Table~\ref{tab:coupled_vs_uncoupled}, Coupled Sparsification dramatically outperforms Uncoupled Sparsification.

\begin{table}[!htbp]
\caption{Coupled Sparsification V.S. Uncoupled Sparsification in FFN.}
\centering
\begin{tabular}{cccccc}
\toprule
\multicolumn{1}{l}{}  & \textbf{Sparsity} & 30\%  & 50\%  & 70\%   & 90\% \\ 
\midrule
\multirow{2}{*}{\begin{tabular}[c]{@{}c@{}}LLaMA2 (SwiGLU) \\ (PPL=9.4)\end{tabular}} & Coupled   & 142.2 & 730.7 & 2115.6 & 6481.6   \\
  & Uncoupled & 187062.7  & 208483.4  & 261173.1   & 173291.2 \\ 
\midrule
\multirow{2}{*}{\begin{tabular}[c]{@{}c@{}}GPT2-m (GELU) \\ (PPL=21.4)\end{tabular}}  & Coupled   & 19137.1   & 24296.2   & 16157.3& 28261.5  \\
  & Uncoupled & 4205783.8 & 1158156.8 & 12031555.3 & 7131925831.1 \\
\bottomrule
\end{tabular}
\label{tab:coupled_vs_uncoupled}
\end{table}

\section{Hyper-parameters}
\label{hyper_sec}
\subsection{Selection of $K$ and $\theta$.} 
As described in Algorithm \ref{alg:trainig}, $K$ determines the percentage of $\bm{W}_{couple}$'s that will be sparsified in each epoch, and threshold $\theta$ impacts the amount of to-be-removed coupled row/column pairs within those unimportant  $\bm{W}_{couple}$'s. As shown in Fig. \ref{fig:vit_hyper}, when applying EcoSpa at the pre-training stage, larger $\theta$ brings better training performance, while the model is less sensitive to the change of $K$. On the other hand, Fig. \ref{fig:nlp_hyper} shows that it is better to use smaller $K$ at the fine-tuning stage, while the selection of $\theta$ is less significant in this scenario. We hypothesize that such difference might be due to the existence of a pre-trained model in the fine-tuning process since a pre-trained model typically has more diverse distribution of $\bm{W}_{couple}$ than the model being sparsely trained from random initialization, making the identification of unimportant $\bm{W}_{couple}$ more effective. Therefore, considering $K$ cannot be too small (otherwise, it is challenging to meet the model size budget (see the change of parameters (dashed curve) in Fig. \ref{fig:nlp_hyper}), we set $K=30\%$ and $\theta=90\%$ in our experiments.
% \begin{figure}
% \includegraphics[width=0.45\textwidth]{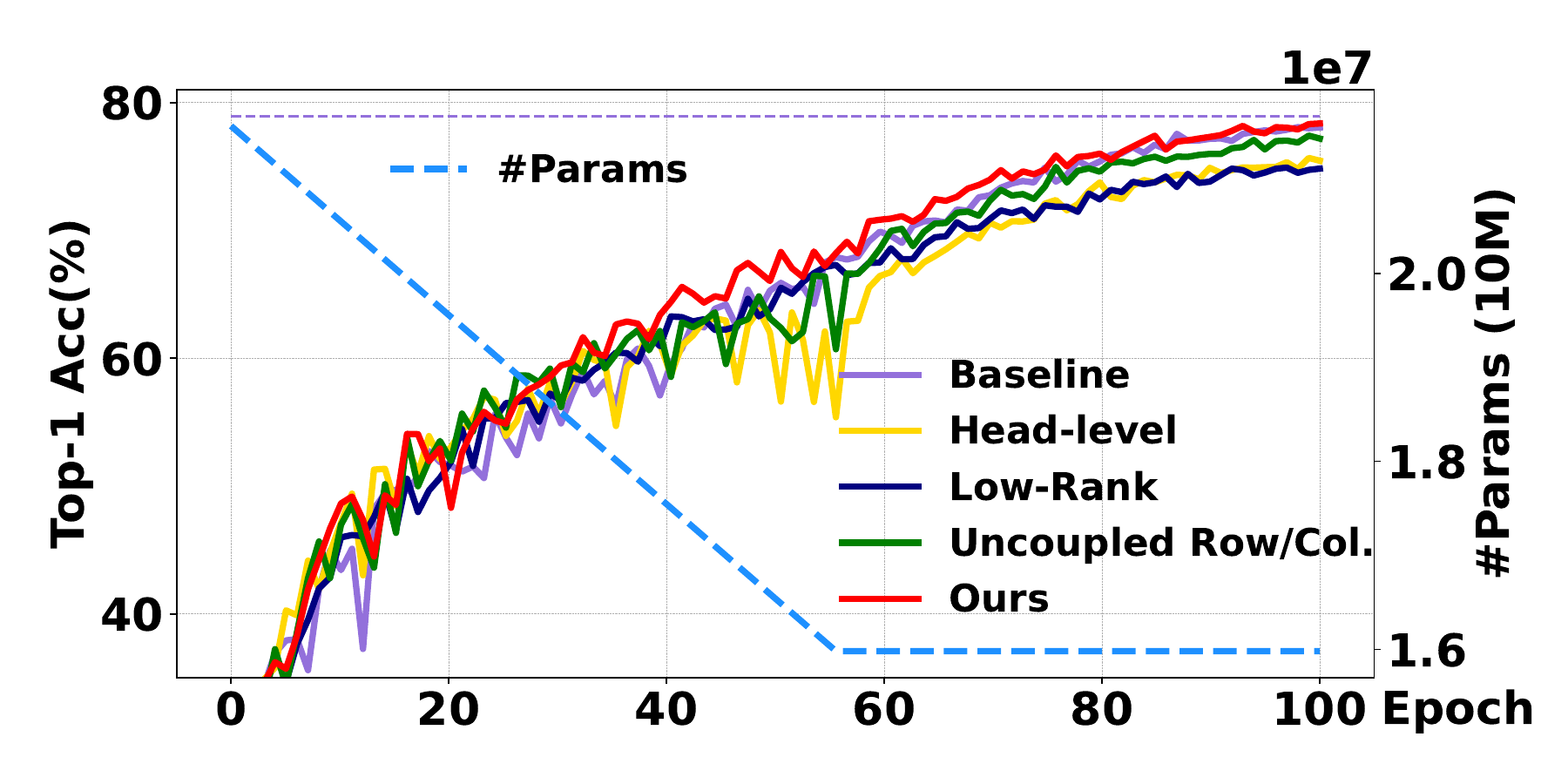}
% \caption{Pre-training DeiT-Small model on CIFAR-10 dataset using different $\bm{W}_{couple}$ compression methods.  All the methods remove the same number of parameters within the same $\bm{W}_{couple}$'s in each epoch.}
% \label{fig:diff_contraction}
% % \vspace{-3mm}
% \end{figure}

\begin{figure}[!htbp]
    \centering
    \begin{subfigure}
        \centering
        \includegraphics[width=1\linewidth]{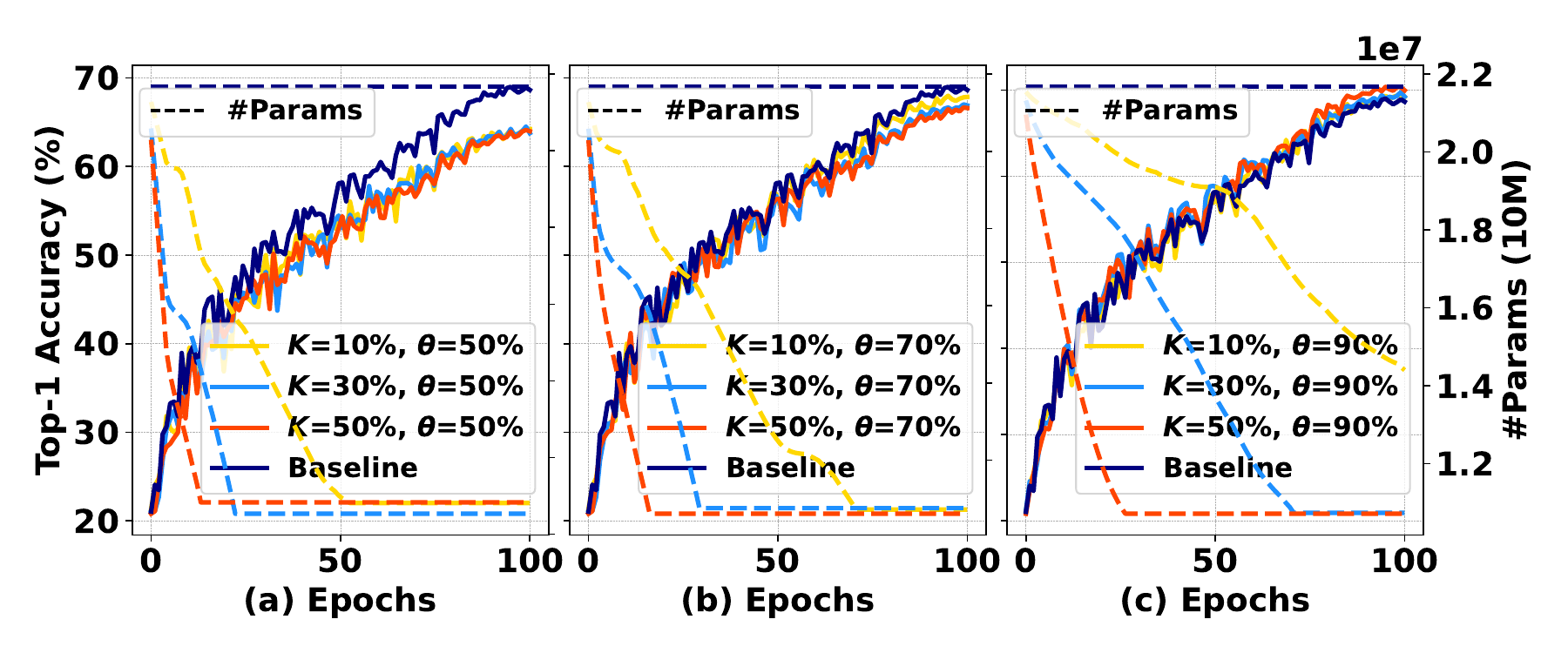}
        \caption{Pre-training sparse DeiT-Small on CIFAR-10 with various $K$ and $\theta$. The target model size is 11M.}
        \label{fig:vit_hyper} 
    \end{subfigure}
    \begin{subfigure}
        \centering
        \includegraphics[width=1\linewidth]{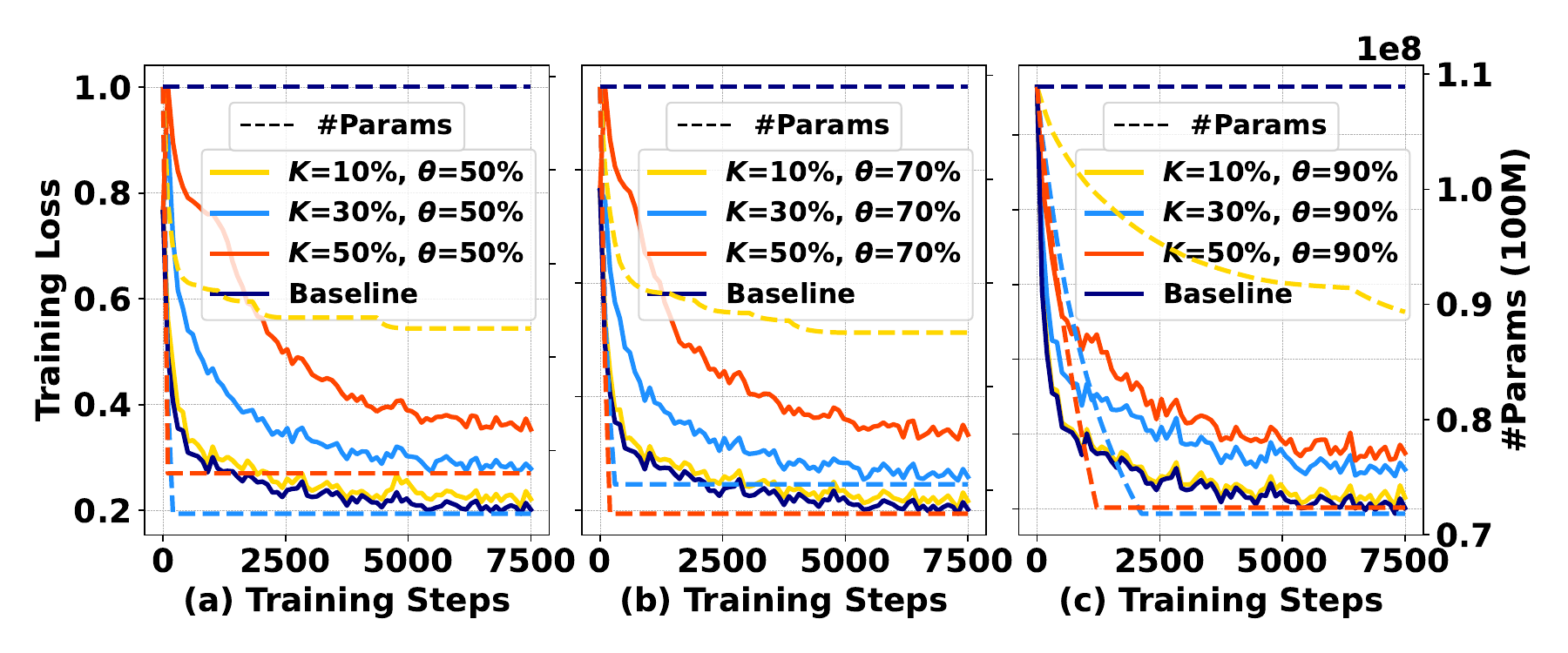}
        \caption{Fine-tuning Pre-trained sparse BERT-Base on SQuAD with various $K$ and $\theta$. The target model size is 72.6M.}
        \label{fig:nlp_hyper}
    \end{subfigure}
\end{figure}

\begin{figure}[!htbp]
    \centering
    \includegraphics[width=1\linewidth]{}
    \caption{Pre-training DeiT-Small model on CIFAR-10 dataset using different $\bm{W}_{couple}$ compression methods.  All the methods remove the same number of parameters within the same $\bm{W}_{couple}$'s in each epoch.}
    \label{fig:diff_contraction}
    \vspace{-3mm}
\end{figure}

\subsection{Importance of Coupled Sparsification.} 
Fig. \ref{fig:diff_contraction} compares the training performance using different methods to compress $\bm{W}_{couple}$. It is seen that our proposed coupled row/column-wise sparsification scheme achieves the best performance. In particular, it outperforms the uncoupled row/column-based solution, demonstrating the importance of removing the row/column of $\bm{W}_{1}$ and $\bm{W}_{2}$ in a coupled way. Notice that though our approach is inspired by tSVD, it achieves better performance than directly applying tSVD on $\bm{W}_{couple}$. This is because SVD not only changes the structure of the model but also alters the numerical distribution of the original model weights. Consequently, the optimizers that keep track of moment information, \emph{e.g.}, Adam \cite{kingma2014adam}, cannot work well since they will not be able to use previously accumulated information, thereby affecting the model performance.

\begin{table}[!htbp] % Use placement specifiers like htbp (here, top, bottom, page)
    \centering
    \caption{Overview of Experimental Setups and Hyper-parameters} %
    \label{tab:experimental_setup}
    \begin{tabular}{lllcc} % Define column alignments: l=left, c=center, r=right
        \toprule % Top rule
        \textbf{Experiments} & \textbf{Model} & \textbf{Dataset} & \textbf{top - $K$(\%)} & \textbf{$\theta$(\%)} \\ % Header row; use $ $ for math mode
        \midrule % Rule below header
        Table~\ref{tbl:llama-1b} & LLaMA-1B & C4 & 30 & 90 \\
        Table~\ref{tbl:llama-7b} & LLaMA-7B & C4 & 30 & 90 \\
        Table~\ref{tbl:gpt} & GPT & WikiText & 30 & 90 \\
        Table~\ref{tbl:deit} & DeiT & ImageNet & 30 & 90 \\
        Table~\ref{tbl:llama} & LLaMA2-7B@80\% & WikiText & 30 & 95 \\
        Table~\ref{tbl:llama} & LLaMA2-7B@70\% & WikiText & 30 & 90 \\
        Table~\ref{tbl:llama} & LLaMA2-7B@60\% & WikiText & 30 & 90 \\
        Table~\ref{tbl:llama} & LLaMA2-7B@50\% & WikiText & 30 & 85 \\
        Table~\ref{tbl:llama_zero} & LLaMA2-7B & WikiText & 30 & 95 \\
        \bottomrule % Bottom rule
    \end{tabular}
\end{table}

\subsection{Observation of Compressed Model.}
We analyze the resulting structure of the compressed models. Intriguing patterns emerge, revealing how the coupled sparsification strategy interacts differently with components based on model architecture and task. These observations, detailed below and summarized in Tables~\ref{tab:deit_dims} and \ref{tab:gpt2_dims}, provide insights into the method's adaptive nature.

\begin{itemize}
    \item \textbf{Language Models (e.g., GPT-2):} 
    In language models like GPT-2, the primary goal is sequence modeling and maintaining contextual coherence. We observe that EcoSpa tends to preserve the dimensions of the value projection ($\bm{W}^V$) and the final output projection ($\bm{W}^O$) matrices within the attention blocks. These components are crucial for integrating contextual information and propagating it through the network. Conversely, the query ($\bm{W}^Q$) and key ($\bm{W}^K$) matrices, which predominantly determine the attention patterns (implicitly via $\bm{W}^Q \bm{W}^{K^T}$), undergo more aggressive compression. This suggests that EcoSpa identifies greater redundancy within the attention pattern formation mechanism while prioritizing the preservation of the value pathway for maintaining coherence, aligning with findings on functional roles within transformer circuits~\cite{elhage2021mathematical}. % Optional: Add relevant citation if applicable

    \item \textbf{Vision Models (e.g., DeiT):} 
    In vision transformers aimed at classification tasks like DeiT, a different pattern emerges. While EcoSpa compresses matrices throughout most of the network layers (Blocks 1-10 in DeiT-Tiny, see Table~\ref{tab:deit_dims}), the dimensions in the final, higher-level blocks (Blocks 11-12) often remain largely unchanged or are compressed less aggressively. These later layers are typically responsible for consolidating abstract features critical for the final classification decision. The observed behavior indicates that EcoSpa implicitly safeguards these high-level representations by reducing compression in deeper layers, while readily exploiting redundancies in the earlier feature extraction layers.
\end{itemize}

These distinct behaviors across model types underscore EcoSpa's ability to adaptively apply sparsity based on the implicit functional importance of different components, preserving critical pathways while effectively pruning less essential dimensions identified through the coupled analysis.

% Table for DeiT-Tiny
\begin{table}[!htbp] % h=here, t=top, b=bottom, p=page of floats. '!' allows ignoring some restrictions.
\centering
\caption{Dimensionality changes in DeiT-Tiny (3 Heads per block) after applying EcoSpa block-wise.}
\label{tab:deit_dims}
\resizebox{\textwidth}{!}{% Make table fit within text width if needed
\begin{tabular}{@{}lccc@{}} % @{} removes padding at table edges
\toprule
Layer / Block & $\bm{W}^Q, \bm{W}^K$ Dim. & $\bm{W}^V, \bm{W}^O$ Dim. & $\bm{W}^{\text{in}}, \bm{W}^{\text{out}}$ (FFN) Dim. \\ 
\midrule
Original      & $192 \times 64$ & $192 \times 64$ & $192 \times 768$ \\
\midrule
\multicolumn{4}{l}{\textit{After EcoSpa}} \\ % Indicate these are post-EcoSpa dimensions
Block 1       & $192 \times 37$ & $192 \times 41$ & $192 \times 399$ \\
Block 2       & $192 \times 39$ & $192 \times 43$ & $192 \times 393$ \\
Block 3       & $192 \times 40$ & $192 \times 42$ & $192 \times 501$ \\
Block 4       & $192 \times 41$ & $192 \times 50$ & $192 \times 451$ \\
Block 5       & $192 \times 43$ & $192 \times 50$ & $192 \times 419$ \\
Block 6       & $192 \times 49$ & $192 \times 50$ & $192 \times 402$ \\
Block 7       & $192 \times 56$ & $192 \times 50$ & $192 \times 391$ \\
Block 8       & $192 \times 55$ & $192 \times 56$ & $192 \times 379$ \\
Block 9       & $192 \times 56$ & $192 \times 57$ & $192 \times 366$ \\
Block 10      & $192 \times 50$ & $192 \times 56$ & $192 \times 387$ \\
Block 11      & $192 \times 64$ & $192 \times 64$ & $192 \times 768$ \\
Block 12      & $192 \times 64$ & $192 \times 64$ & $192 \times 768$ \\
\bottomrule
\end{tabular}%
} % end resizebox
\end{table}

% Table for GPT-2-Small
\begin{table}[!htbp]
\centering
\caption{Dimensionality changes in GPT-2-Small (12 Heads per block) after applying EcoSpa block-wise.}
\label{tab:gpt2_dims}
\resizebox{\textwidth}{!}{% Make table fit within text width if needed
\begin{tabular}{@{}lccc@{}}
\toprule
Layer / Block & $\bm{W}^Q, \bm{W}^K$ Dim. & $\bm{W}^V, \bm{W}^O$ Dim. & $\bm{W}^{\text{in}}, \bm{W}^{\text{out}}$ (FFN) Dim. \\ 
\midrule
Original      & $768 \times 64$ & $768 \times 64$ & $768 \times 3072$ \\ 
\midrule
\multicolumn{4}{l}{\textit{After EcoSpa}} \\ % Indicate these are post-EcoSpa dimensions
Block 1       & $768 \times 64$ & $768 \times 64$ & $768 \times 2064$ \\
Block 2       & $768 \times 25$ & $768 \times 64$ & $768 \times 1337$ \\
Block 3       & $768 \times 21$ & $768 \times 64$ & $768 \times 1259$ \\
Block 4       & $768 \times 21$ & $768 \times 64$ & $768 \times 1643$ \\
Block 5       & $768 \times 37$ & $768 \times 64$ & $768 \times 1639$ \\
Block 6       & $768 \times 30$ & $768 \times 64$ & $768 \times 1613$ \\
Block 7       & $768 \times 20$ & $768 \times 64$ & $768 \times 1629$ \\
Block 8       & $768 \times 20$ & $768 \times 64$ & $768 \times 1104$ \\
Block 9       & $768 \times 20$ & $768 \times 64$ & $768 \times 1073$ \\
Block 10      & $768 \times 21$ & $768 \times 64$ & $768 \times 1589$ \\
Block 11      & $768 \times 26$ & $768 \times 64$ & $768 \times 1574$ \\
Block 12      & $768 \times 21$ & $768 \times 64$ & $768 \times 2044$ \\
\bottomrule
\end{tabular}%
} % end resizebox
\end{table}
\section{Broader Impacts}
Our research on sparse training for transformer models enables more computationally efficient and environmentally sustainable training and deployment of AI models. Developing sparse training methods for Transformer models can significantly accelerate AI training and inference, fostering innovation while reducing energy consumption. This not only democratizes the development and utilization of AI but also aligns with global efforts towards sustainable computing practices, mitigating the environmental impact associated with training resource-intensive neural networks. As AI permeates various domains, optimizations like sparse training will play a crucial role in striking a balance between model performance and environmental responsibility, ensuring the responsible advancement of this technology.

%%%%%%%%%%%%%%%%%%%%%%%%%%%%%%%%%%%%%%%%%%%%%%%%%%%%%%%%%%%%

% \newpage
% \input{7_checklist}

\end{document}